\documentclass{article}

\usepackage{microtype}
\usepackage{graphicx}
\usepackage{subcaption}
\usepackage{booktabs}
\usepackage{bm}
\usepackage{pifont} 
\usepackage{colortbl}
\usepackage{xcolor}
\usepackage{arydshln}
\usepackage{tabularx}
\usepackage{array}
\usepackage{adjustbox}

\usepackage[table]{xcolor}
\usepackage{colortbl}

\definecolor{HeaderBlue}{HTML}{2D4A6F}
\definecolor{OursHighlight}{HTML}{E3EEF9}
\definecolor{BestGreen}{HTML}{D4EDDA}
\definecolor{SectionGray}{HTML}{F0F3F5}
\definecolor{PolyRed}{HTML}{C41E3A}

\definecolor{DeltaLight}{HTML}{FFCDD2}
\definecolor{DeltaMild}{HTML}{EF9A9A}
\definecolor{DeltaMedium}{HTML}{E57373}
\definecolor{DeltaStrong}{HTML}{EF5350}
\definecolor{DeltaSevere}{HTML}{C62828}

\usepackage{hyperref}

\usepackage[accepted]{icml2026}

\usepackage{amsmath}
\usepackage{amssymb}
\usepackage{mathtools}
\usepackage{amsthm}

\usepackage[capitalize,noabbrev]{cleveref}

\theoremstyle{plain}
\newtheorem{theorem}{Theorem}[section]
\newtheorem{proposition}[theorem]{Proposition}

\theoremstyle{definition}

\theoremstyle{remark}

\newcommand{\poly}{{\textcolor{PolyRed}{$\blacklozenge$}}}

\icmltitlerunning{Activation-Free Backbones for Image Recognition}

\begin{document}

\twocolumn[
  \icmltitle{Activation-Free Backbones for Image Recognition: \\Polynomial Alternatives within MetaFormer-Style Vision Models}

  \icmlsetsymbol{equal}{*}

  \begin{icmlauthorlist}
    \icmlauthor{Jeffrey Wang}{cs}
    \icmlauthor{Jonathan Gregory}{cs}
    \icmlauthor{Grigorios G Chrysos}{cs}
  \end{icmlauthorlist}

  \icmlaffiliation{cs}{University of Wisconsin--Madison, Madison, WI, USA}

  \icmlcorrespondingauthor{Jeffrey Wang}{jjwang8@wisc.edu}

  \icmlkeywords{Polynomial Networks, Vision Backbones, Activation-Free, Hadamard Product, MetaFormer, Image Classification}

  \vskip 0.3in
]
\printAffiliationsAndNotice{}

\begin{abstract}
Modern vision backbones treat pointwise activations (e.g., ReLU, GELU) and exponential softmax as essential sources of nonlinearity, but we demonstrate they are not required within MetaFormer-style vision backbones. We design activation-free polynomial alternatives for three core primitives (MLPs, convolutions, and attention), where Hadamard products replace standard nonlinearities to yield polynomial functions of the input. These modules integrate seamlessly into existing architectures: instantiated within MetaFormer, a modular framework for vision backbones, our PolyNeXt models match or exceed activation-based counterparts across model scales on ImageNet classification, ADE20K semantic segmentation, and out-of-distribution robustness. We also substantially outperform prior polynomial networks at reduced computational cost, showing that polynomial variants of standard modules beat complex custom architectures. Our code is available at \url{https://github.com/jjwang8/PolyNeXt}.
\end{abstract}

\section{Introduction}

Modern vision architectures rely on pointwise activations (ReLU, GELU, SiLU) in feedforward layers and exponential functions inside softmax attention \citep{vaswani2017attention, dosovitskiy2021vit}. These activation functions were introduced to provide nonlinearity, enabling neural networks to approximate complex functions. However, activation functions are not the only source of nonlinearity. Polynomials offer a fundamental and well-understood alternative: by the Stone-Weierstrass theorem, any continuous function on a compact domain can be uniformly approximated by polynomials. Unlike activation functions, whose approximation properties depend on architectural depth and specific functional forms, polynomials provide a natural basis for function approximation.

The Hadamard product (elementwise multiplication) of two learned linear projections provides a simple mechanism for constructing polynomial representations. Given an input vector $\bm{x}$, computing $(\bm{W}_a\bm{x}) * (\bm{W}_b\bm{x})$ yields a second-degree polynomial in the input features, where $\bm{W}_a$ and $\bm{W}_b$ are learned weight matrices representing linear projections and $*$ denotes elementwise multiplication. By composing such operations across layers, networks can represent polynomials of arbitrary degree without any activation functions. This approach has theoretical appeal: the degree grows predictably with depth, and the resulting networks inherit the approximation guarantees of polynomial function spaces.

\begin{figure*}[!t]
    \centering
    \hspace*{0.2cm}
    \centerline{\includegraphics[width=0.99\textwidth]{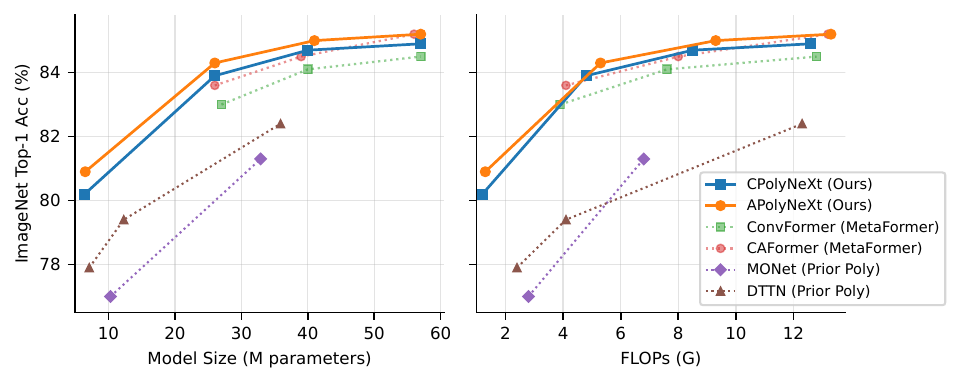}}
    \vspace{-2mm}
    \caption{ImageNet-1K accuracy comparison. Our polynomial backbones (CPolyNeXt, APolyNeXt) consistently outperform MetaFormer instantiations (ConvFormer, CAFormer) and prior polynomial networks (MONet, DTTN) across model scales. Curves show accuracy vs.\ (a) parameters and (b) FLOPs.}
    \label{fig:scaling_curves}
    \vspace{-3mm}
\end{figure*}

A growing body of work has explored polynomial networks that replace activations with multiplicative interactions \citep{chrysos2020pi, chrysos2025hadamard}. Part of this research has been motivated by Fully Homomorphic Encryption (FHE), an encryption paradigm where standard networks incur substantial overhead due to their non-polynomial activations \cite{brakerski2014leveled}. Recent architectures such as MONet \citep{cheng2024monet} and DTTN \citep{nie2025dttn} have demonstrated that polynomial networks can approach the performance of activation-based models. However, these methods introduce custom architectures with polynomial modules tightly coupled to their specific designs. This limits their practical utility: adopting them requires committing to an entirely new architecture, their custom designs cannot easily incorporate future improvements to standard layers (e.g., efficient attention variants, better normalization), and when evaluated end-to-end, they still underperform activation-based counterparts. 

We take a different approach: rather than designing a new architecture from scratch, we develop polynomial alternatives to the standard modules found in modern vision backbones. Our key insight is that by preserving the input-output interface of conventional components, polynomial modules remain compatible with orthogonal architectural improvements. We target three primitives: MLPs for channel mixing, convolutions for local spatial mixing, and self-attention for global spatial mixing. Here, \emph{mixing} refers to operations that combine information across a particular dimension, either channels (features at each spatial location) or spatial positions. These operations contain nearly all nonlinearities in contemporary backbones: MLPs apply pointwise activations between linear projections, separable convolutions interleave spatial filtering with activations, and attention uses the exponential function in softmax \citep{dosovitskiy2021vit, liu2022convnext, liu2021swin}. Further, by removing all activations, our work makes progress toward FHE-amenable architectures that can be used for privacy and security tasks.

The polynomial modules we introduce are:

\textbf{PolyMLP} replaces the activation in feedforward networks with a Hadamard product of two parallel linear projections:
\begin{equation}
\text{PolyMLP}(\bm{x}) = \bm{W}_o \left((\bm{W}_a\bm{x}) * (\bm{W}_b\bm{x})\right).
\label{eq:polymlp_intro}
\end{equation}

\textbf{PolyConv} fuses parallel convolutional branches with different receptive fields via elementwise multiplication, replacing the activation in separable convolutions.

\textbf{PolyAttn} replaces the exponential in softmax attention with a polynomial kernel, yielding an activation-free attention mechanism.

Because our changes target specific operations (activations, exponentials) rather than module structure, they remain compatible with architectural improvements such as windowed attention \citep{liu2021swin}, sparse patterns \cite{child2019sparse}, or efficient attention variants. 

We validate our modules within the MetaFormer framework \citep{yu2022metaformer, yu2024metaformer}. MetaFormer provides a modular template for vision backbones with pluggable mixing operations. Its instantiations ConvFormer and CAFormer use exactly the primitives we target: separable convolutions, self-attention, and gated MLPs. This modular structure allows us to test each polynomial replacement independently under controlled conditions. Because these primitives are widely adopted \cite{ shi2024transnext, fan2024rmt, zhu2023biformer}, we expect our polynomial replacements to transfer beyond MetaFormer to other frameworks.

\begin{figure*}[t!]
    \centering
    \centerline{\includegraphics[width=0.85\textwidth]{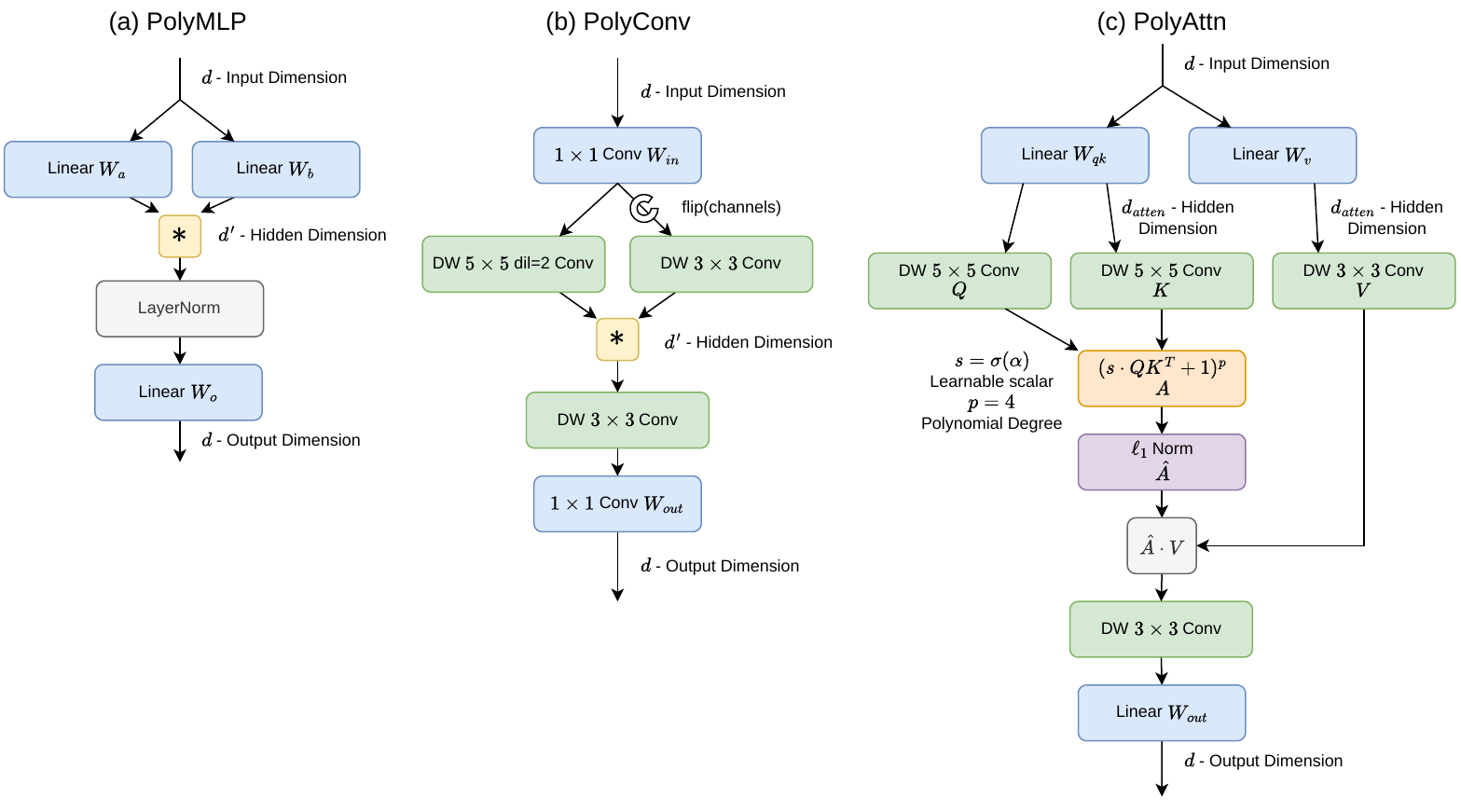}}
    \vspace{-2mm}
    \caption{\textbf{Polynomial module primitives.} We introduce activation-free replacements for three core vision operators: (a) \textbf{PolyMLP} for channel mixing, which forms a second-degree polynomial by Hadamard-multiplying two linear projections (with LayerNorm) before an output projection; (b) \textbf{PolyConv} for convolutional spatial mixing, which replaces the activation in separable convolutions with a Hadamard fusion of a coarse (dilated) and fine depthwise branch (with channel reversal) followed by consolidation and output projection; and (c) \textbf{PolyAttn} for attention-based spatial mixing, which shares the $Q/K$ projection, injects local context via depthwise convolutions on $Q,K,V$, and replaces softmax with a polynomial attention kernel $(s\cdot QK^\top + 1)^p$ followed by $\ell_1$ normalization, where $s=\sigma(\lambda)$ is a learnable scale and $p$ is the polynomial degree. Layer-by-layer comparisons to standard MLP/Conv/Attention blocks are deferred to Appendix Fig.~\ref{fig:module_comparisons}.}
    \label{fig:poly_primitives}
    \vspace{-3mm}
\end{figure*}

Realizing the full potential of polynomial modules requires addressing stability challenges. Unlike ReLU which doesn't amplify outputs, Hadamard products of two large values produce even larger values. This multiplicative amplification compounds across layers, and we observed training instabilities in early experiments without proper controls. To address these challenges and enable stable training of networks reaching 200 layers, we introduce lightweight stabilization mechanisms: Sigmoid-Scale bounds residual contributions via a sigmoid-parameterized scalar, and multi-input skip connections (following NASNet \citep{zoph2018nasnet}) provide each cell with features from two preceding cells, improving gradient flow. We also find that polynomial networks benefit from a depth-over-width design: narrower layers stacked deeper outperform wider, shallower configurations at matched parameter counts (\cref{sec:architecture}). Prior polynomial networks lack these mechanisms, instead using shallower, wider designs that limit polynomial degree growth.

To evaluate our polynomial modules, we instantiate them within a MetaFormer-style architecture \citep{yu2024metaformer}, adopting its four-stage hierarchical structure while adding our stabilization mechanisms. We introduce two model variants: \textbf{CPolyNeXt}, which uses polynomial convolutions throughout, and \textbf{APolyNeXt}, which combines polynomial convolutions with polynomial attention. This design enables direct comparison against MetaFormer instantiations (ConvFormer, CAFormer) at matched scales. Our PolyNeXt backbones match or exceed these activation-based counterparts on both standard and robustness benchmarks, while outperforming prior polynomial networks at lower computational cost. Ablations confirm that reintroducing activations consistently degrades performance.

Our contributions are:
\begin{itemize}
    \item \textbf{Polynomial alternatives to core vision primitives.} We introduce PolyMLP, PolyConv, and PolyAttn, which substitute standard nonlinearities with Hadamard products while preserving input-output interfaces for compatibility with architectural improvements.
    
    \item \textbf{Evidence that activations are unnecessary in MetaFormer-style vision backbones.} We introduce PolyNeXt, a family of polynomial vision backbones, and show that polynomial modules match or exceed activation-based counterparts across model scales, and that reintroducing activations consistently degrades performance.
    
    \item \textbf{A lightweight stabilization recipe.} We introduce Sigmoid-Scale, multi-input skip connections, and a depth-over-width cell design, enabling stable training of polynomial networks up to 200 layers.
\end{itemize}

\section{Related Work}

\textbf{Polynomial and multiplicative networks.}
Polynomial networks construct representations through multiplicative compositions rather than pointwise activations \citep{ivakhnenko1971polynomial, shin1991pisigma}. The Hadamard product of two linear projections is one such multiplicative operation used extensively in polynomial networks, providing nonlinearity without activation functions.  $\Pi$-Nets formalize this construction through structured skip connections \citep{chrysos2020pi}, with subsequent work addressing stability \citep{chrysos2023regularization}. Interested readers may refer to a recent survey for a comprehensive taxonomy of this structure \citep{chrysos2025hadamard}. Recent architectures MONet \citep{cheng2024monet} and DTTN \citep{nie2025dttn} propose custom polynomial architectures with specialized module designs to achieve high-accuracy classification results without relying extensively on activation functions. The Hadamard product also appears in Squeeze-and-Excitation networks \citep{hu2018senet}, Gated Linear Units \citep{dauphin2017glu, shazeer2020glu}, and StyleGAN \citep{karras2019stylegan}. Our proposed polynomial layers differ from the layers used in these previous architectures by removing activations entirely.

\textbf{MetaFormer and vision backbones.}
MetaFormer abstracts vision transformer design into a modular template with pluggable mixing operations \citep{yu2022metaformer}. The finding that even average pooling achieves competitive accuracy suggests architectural structure matters more than the specific mixer. ConvFormer and CAFormer instantiate this template with convolutions and hybrid conv-attention respectively \citep{yu2024metaformer}. We adopt MetaFormer because its modular design enables controlled evaluation of polynomial operators against activation-based counterparts.

 Related work on linear attention variants and stabilization techniques for deep networks appears in \cref{app:extended_related_work}.

\section{Polynomial Modules}
\label{sec:modules}

We present activation-free replacements for channel mixing (PolyMLP), convolutional spatial mixing (PolyConv), and attention-based spatial mixing (PolyAttn). \cref{fig:poly_primitives} summarizes the three primitives; layer-by-layer comparisons appear in Appendix Fig.~\ref{fig:module_comparisons}.

\subsection{PolyMLP: Activation-Free Channel Mixing}
\label{sec:polymlp}

Standard feedforward networks apply a pointwise activation (e.g., GELU) between linear projections, and gated variants like GLU \citep{dauphin2017glu, shazeer2020glu} add multiplicative interactions but retain an activation on one branch. PolyMLP removes activations entirely:
\begin{equation}
\text{PolyMLP}(\bm{x}) = \bm{W}_o \left( (\bm{W}_a \bm{x}) * (\bm{W}_b \bm{x}) \right),
\label{eq:polymlp}
\end{equation}
where $\bm{W}_a, \bm{W}_b \in \mathbb{R}^{d' \times d}$ project to an intermediate dimension $d'$, $\bm{W}_o \in \mathbb{R}^{d \times d'}$ projects back, and LayerNorm is applied after the Hadamard product. This yields a second-degree polynomial in the input (see \cref{app:polymlp_proof}).

We set $d' = d$, corresponding to a $2\times$ expansion (smaller than the typical $4\times$), offset by increased depth (\cref{sec:architecture}). For the classification head, we add an internal skip: $\text{PolyHead}(\bm{x}) = \bm{W}_o ( \bm{W}_a \bm{x} + (\bm{W}_a \bm{x}) * (\bm{W}_b \bm{x}) )$, preventing information from bottlenecking through the multiplicative interaction.

\subsection{PolyConv: Polynomial Spatial Mixing via Convolution}
\label{sec:polyconv}

MetaFormer's ConvFormer uses separable convolutions (depthwise spatial filtering followed by pointwise mixing) with an activation between projections \citep{yu2024metaformer}. To replace this activation with a Hadamard product, we introduce parallel convolutional branches with different receptive fields.

The design maximizes feature diversity before multiplication. A coarse branch uses a dilated depthwise convolution ($5\times5$ kernel, dilation 2, covering $9\times9$ spatial extent) for broad context, while a fine branch uses a standard $3\times3$ depthwise convolution for local detail. We apply a channel-flip (reversing channel order) to one branch before fusion to further decorrelate them:
\begin{align}
\bm{h} &= \bm{W}_{\text{in}} \, \bm{x}, \label{eq:polyconv_pw}\\
\bm{m} &= K_c(\bm{h}) * \text{flip}(K_f(\bm{h})), \label{eq:polyconv_fusion}\\
\bm{y} &= \bm{W}_{\text{out}} \left( K(\bm{m}) \right), \label{eq:polyconv_out}
\end{align}
where $\bm{W}_{\text{in}}$ and $\bm{W}_{\text{out}}$ are pointwise ($1\times1$) convolutions, $K_c$ is the dilated coarse branch, $K_f$ is the fine branch, $\text{flip}(\cdot)$ reverses channels, and $K$ is a $3\times3$ consolidation convolution. LayerNorm follows the block. In stage 1 (highest resolution), we use a $3\times3$ dilated convolution for efficiency.

Unlike prior polynomial networks that use similar structure in both branches (MONet \citep{cheng2024monet}, DTTN \citep{nie2025dttn}), PolyConv explicitly constructs heterogeneous receptive fields to produce cross-scale interaction terms. Ablations in \cref{sec:ablations} validate this design.

\subsection{PolyAttn: Polynomial Spatial Mixing via Attention}
\label{sec:polyattn}

Self-attention computes output as a weighted combination of value vectors $\bm{V}$, where weights depend on query-key similarities. Given input $\bm{X}$, standard attention projects to queries $\bm{Q} = \bm{X}\bm{W}_Q$, keys $\bm{K} = \bm{X}\bm{W}_K$, and values $\bm{V} = \bm{X}\bm{W}_V$, then computes $\text{softmax}(\bm{Q}\bm{K}^\top / \sqrt{d_k}) \bm{V}$, where $d_k$ is the key dimension. The exponential in softmax provides nonlinearity.

PolyAttn replaces the exponential with a polynomial kernel. We compute unnormalized attention weights as
\begin{equation}
\bm{A} = \left( s \cdot \bm{Q}\bm{K}^\top + 1 \right)^p,
\label{eq:poly_kernel}
\end{equation}
where $s = \sigma(\lambda)$ is a learnable per-head scalar (initialized to match the standard scale $d_k^{-1/2}$) and $p$ is the polynomial degree. We then apply $\ell_1$ normalization: $\hat{\bm{A}}_{ij} = \bm{A}_{ij} / \sum_k \bm{A}_{ik}$, and compute $\hat{\bm{A}}\bm{V}$. We use $p = 4$, which balances expressiveness and numerical stability.

Following PolyConv's structure, we augment attention with depthwise convolutions on $\bm{Q}$, $\bm{K}$, $\bm{V}$ to provide local spatial context, similar to CvT \citep{wu2021cvt} and CoAtNet \citep{dai2021coatnet}. We share the projection for queries and keys to save parameters. Because PolyAttn modifies only the attention kernel (replacing $\exp$ with a polynomial), it remains compatible with variants like windowing \citep{liu2021swin} or sparse patterns \citep{child2019sparse}.

\begin{figure}[t]
    \centering
    \centerline{\includegraphics[width=\columnwidth]{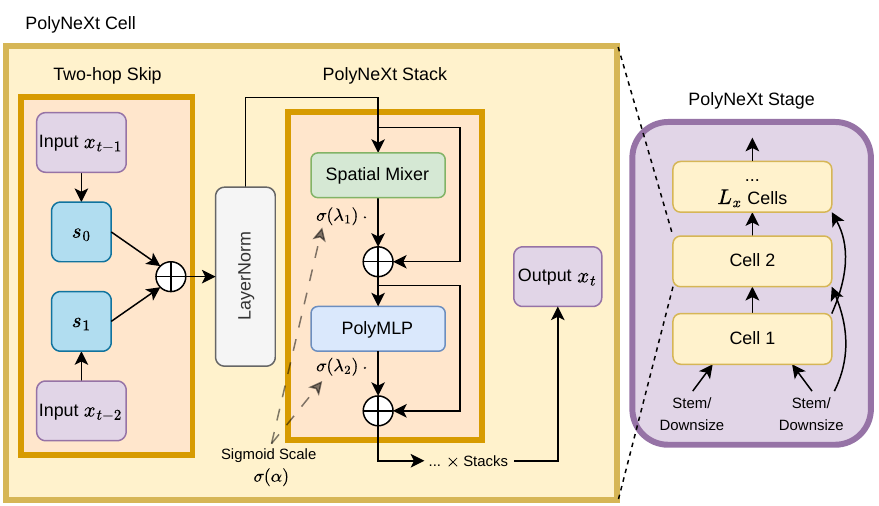}}
    \vspace{-2mm}
    \caption{\textbf{PolyNeXt cell and stabilization.} A cell takes two inputs from the previous two cells, $\bm{x}_{t-1}$ and $\bm{x}_{t-2}$. In the multi-input skip module, learnable per-channel scalars $\bm{s}_0$ and $\bm{s}_1$ reweight $\bm{x}_{t-1}$ and $\bm{x}_{t-2}$ before summation and LayerNorm. The normalized state is processed by $X$ repeated PolyNeXt stacks (spatial mixer then PolyMLP). Each residual branch uses Sigmoid-Scale gating, $\bm{y}=\bm{x}+\sigma(\lambda)\,f(\bm{x})$, to bound residual updates and stabilize deep Hadamard-product networks.}
    \label{fig:stabilization}
    \vspace{-3mm}
\end{figure}

\subsection{Stabilization for Deep Polynomial Networks}
\label{sec:stabilization}

Hadamard products amplify large values, and this compounds across layers. We introduce two techniques to enable stable training of networks in the hundreds of layers.

\textbf{Sigmoid-Scale.} Each sublayer output is scaled by a learnable sigmoid-bounded scalar:
\begin{equation}
\bm{y} = \bm{x} + \sigma(\lambda) \cdot f(\bm{x}),
\label{eq:sigmoid_scale}
\end{equation}
where $f(\bm{x})$ is the sublayer output and $\sigma(\lambda) \in (0, 1)$ bounds the residual contribution. We initialize $\lambda$ such that $\sigma(\lambda)$ decreases with depth (see \cref{app:sigmoid_init}). At inference, $\sigma(\lambda)$ can be absorbed into the preceding layer's weights, adding no activations.

\textbf{Multi-input skip connections.} We organize the network into \emph{cells}, each containing multiple mixer-PolyMLP pairs. Following NASNet \citep{zoph2018nasnet}, each cell receives inputs from both the previous cell ($\bm{x}_{t-1}$) and the cell before that ($\bm{x}_{t-2}$):
\begin{equation}
\tilde{\bm{x}} = \bm{s}_0 * \bm{x}_{t-2} + \bm{s}_1 * \bm{x}_{t-1},
\end{equation}
where $\bm{s}_0, \bm{s}_1 \in \mathbb{R}^{C}$ are learnable per-channel scalars. The combined input passes through LayerNorm before processing.

\textbf{Depth over width.} A depth-$L$ network can represent polynomials of degree $2^L$ with parameters linear in $L$, far fewer than explicitly parameterizing all degree-$2^L$ terms \citep{delalleau2011shallow}. Stacking cells enables deeper networks than prior polynomial architectures.

\subsection{Architecture Instantiation: PolyNeXt}
\label{sec:architecture}

We instantiate our polynomial modules within a MetaFormer-style architecture with four hierarchical stages. Spatial resolution halves at each stage transition, while channel dimension increases. An overview diagram is provided in Appendix Fig.~\ref{fig:architecture}.

\textbf{Variants.} \textbf{CPolyNeXt} uses PolyConv for spatial mixing in all stages, providing a pure convolutional polynomial backbone. \textbf{APolyNeXt} follows CAFormer's hybrid design: PolyConv in the first two stages (high resolution, where global attention is expensive) and PolyAttn in the last two stages (low resolution, where global context is beneficial).

\textbf{Stem and downsampling.} The stem is a $7\times7$ convolution with stride 4. For stage 1, we initialize both skip inputs $(\bm{x}_{-2}, \bm{x}_{-1})$ using the stem output. For stages $s>1$, we initialize $(\bm{x}_{-2}, \bm{x}_{-1})$ using the final two cell outputs from the previous stage. Downsampling between stages applies stride-2 convolutions independently to both skip paths ($\bm{x}_{t-1}$ and $\bm{x}_{t-2}$) before summation.

\textbf{Configuration.} \cref{tab:architecture} in Appendix summarizes model configurations. We refer to one mixer-PolyMLP pair as a \emph{stack}; the number of stacks per cell increases with model size.

\begin{table*}[t!]
\centering

\caption{\textbf{ImageNet-1K classification results.} PolyNeXt variants match or exceed comparable activation-based models across all scales while outperforming prior polynomial networks by 2--3 points. Models trained at $224 \times 224$ resolution without additional pre-training data.\poly~indicates activation-free (polynomial) models. $^\dagger$StarNet uses element-wise multiplication but retains activations.}
\label{tab:main_results}

\fontsize{7.5}{9.5}\selectfont
\setlength{\tabcolsep}{4pt}
\renewcommand{\arraystretch}{1.10}

\begin{minipage}[t]{0.49\textwidth}
\centering
\textbf{(a) Conv / MLP / Polynomial Conv}\par\vspace{4pt}

\adjustbox{max width=\columnwidth}{%
\begin{tabular}{lccc}
\rowcolor{HeaderBlue}
\textcolor{white}{\textbf{Model}} & 
\textcolor{white}{\textbf{Params (M)}} & 
\textcolor{white}{\textbf{FLOPs (G)}} & 
\textcolor{white}{\textbf{Top-1 (\%)}} \\
\midrule

\rowcolor{SectionGray}
\multicolumn{4}{l}{\textit{Tiny models} ($<$12M params)} \\
MogaNet-T~\citep{li2024moganet} & 5.2 & 1.1 & 79.0 \\
StarNet-S4$^\dagger$~\citep{ma2024starnet} & 7.5 & 1.1 & 78.4 \\
DTTN-T\poly~\citep{nie2025dttn} & 7.1 & 2.4 & 77.9 \\
MONet-T\poly~\citep{cheng2024monet} & 10 & 2.8 & 77.0 \\
\rowcolor{OursHighlight}
\textbf{CPolyNeXt-T}\poly & 6.4 & 1.2 & \cellcolor{BestGreen}\textbf{80.2} \\
\addlinespace[2pt]

\rowcolor{SectionGray}
\multicolumn{4}{l}{\textit{Small models} ($\sim$12--30M params)} \\
ConvNeXt-T~\citep{liu2022convnext} & 29 & 4.5 & 82.1 \\
ConvFormer-S18~\citep{yu2024metaformer} & 27 & 3.9 & 83.0 \\
DTTN-S\poly~\citep{nie2025dttn} & 12 & 4.1 & 79.4 \\
\rowcolor{OursHighlight}
\textbf{CPolyNeXt-S}\poly & 26 & 4.8 & \cellcolor{BestGreen}\textbf{83.9} \\
\addlinespace[2pt]

\rowcolor{SectionGray}
\multicolumn{4}{l}{\textit{Medium models} ($\sim$30--50M params)} \\
ConvNeXt-S~\citep{liu2022convnext} & 50 & 8.7 & 83.1 \\
MogaNet-B~\citep{li2024moganet} & 44 & 9.9 & 84.3 \\
InternImage-S~\citep{wang2023internimage} & 50 & 8.0 & 84.2 \\
UniConvNet-S~\citep{wang2025uniconvnet} & 50 & 8.5 & 84.5 \\
ConvFormer-S36~\citep{yu2024metaformer} & 40 & 7.6 & 84.1 \\
MONet-S\poly~\citep{cheng2024monet} & 33 & 6.8 & 81.3 \\
DTTN-B\poly~\citep{nie2025dttn} & 36 & 12.3 & 82.4 \\
\rowcolor{OursHighlight}
\textbf{CPolyNeXt-B}\poly & 40 & 8.5 & \cellcolor{BestGreen}\textbf{84.7} \\
\addlinespace[2pt]

\rowcolor{SectionGray}
\multicolumn{4}{l}{\textit{Large models} ($\sim$50--100M params)} \\
MogaNet-L~\citep{li2024moganet} & 83 & 15.9 & 84.7 \\
ConvNeXt-B~\citep{liu2022convnext} & 89 & 15.4 & 83.8 \\
ConvFormer-M36~\citep{yu2024metaformer} & 57 & 12.8 & 84.5 \\
\rowcolor{OursHighlight}
\textbf{CPolyNeXt-L}\poly & 57 & 12.6 & \cellcolor{BestGreen}\textbf{84.9} \\
\bottomrule
\end{tabular}%
}
\end{minipage}\hfill
\begin{minipage}[t]{0.49\textwidth}
\centering
\textbf{(b) Attention / Hybrid}\par\vspace{4pt}

\adjustbox{max width=\columnwidth}{%
\begin{tabular}{lccc}
\rowcolor{HeaderBlue}
\textcolor{white}{\textbf{Model}} & 
\textcolor{white}{\textbf{Params (M)}} & 
\textcolor{white}{\textbf{FLOPs (G)}} & 
\textcolor{white}{\textbf{Top-1 (\%)}} \\
\midrule

\rowcolor{SectionGray}
\multicolumn{4}{l}{\textit{Tiny models} ($<$12M params)} \\
FAN-T-Hybrid~\citep{zhou2022understanding} & 7.0 & 3.5 & 80.1 \\
\rowcolor{OursHighlight}
\textbf{APolyNeXt-T}\poly & 6.5 & 1.3 & \cellcolor{BestGreen}\textbf{80.9} \\
\addlinespace[2pt]

\rowcolor{SectionGray}
\multicolumn{4}{l}{\textit{Small models} ($\sim$12--30M params)} \\
CSWin-T~\citep{dong2022cswin} & 23 & 4.3 & 82.7 \\
UniFormer-S~\citep{li2023uniformer} & 22 & 3.6 & 82.9 \\
BiFormer-S~\citep{zhu2023biformer} & 26 & 4.5 & 83.8 \\
TransNeXt-Tiny~\citep{shi2024transnext} & 28 & 5.7 & 84.0 \\
RMT-S~\citep{fan2024rmt} & 27 & 4.5 & 84.1 \\
CAFormer-S18~\citep{yu2024metaformer} & 26 & 4.1 & 83.6 \\
\rowcolor{OursHighlight}
\textbf{APolyNeXt-S}\poly & 26 & 5.3 & \cellcolor{BestGreen}\textbf{84.3} \\
\addlinespace[2pt]

\rowcolor{SectionGray}
\multicolumn{4}{l}{\textit{Medium models} ($\sim$30--50M params)} \\
CSWin-S~\citep{dong2022cswin} & 35 & 6.9 & 83.6 \\
UniFormer-B~\citep{li2023uniformer} & 50 & 8.3 & 83.9 \\
TransNeXt-Small~\citep{shi2024transnext} & 50 & 10.3 & 84.7 \\
CAFormer-S36~\citep{yu2024metaformer} & 39 & 8.0 & 84.5 \\
\rowcolor{OursHighlight}
\textbf{APolyNeXt-B}\poly & 41 & 9.3 & \cellcolor{BestGreen}\textbf{84.9} \\
\addlinespace[2pt]

\rowcolor{SectionGray}
\multicolumn{4}{l}{\textit{Large models} ($\sim$50--100M params)} \\
CSWin-B~\citep{dong2022cswin} & 78 & 15.0 & 84.2 \\
MaxViT-S~\citep{tu2022maxvit} & 69 & 11.7 & 84.5 \\
BiFormer-B~\citep{zhu2023biformer} & 57 & 9.8 & 84.3 \\
TransNeXt-Base~\citep{shi2024transnext} & 90 & 18.4 & 84.8 \\
RMT-B~\citep{fan2024rmt} & 54 & 9.7 & 85.0 \\
CAFormer-M36~\citep{yu2024metaformer} & 56 & 13.2 & \cellcolor{BestGreen}\textbf{85.2} \\
\rowcolor{OursHighlight}
\textbf{APolyNeXt-L}\poly & 57 & 13.3 & \cellcolor{BestGreen}\textbf{85.2} \\
\bottomrule
\end{tabular}%
}
\end{minipage}

\end{table*}

\section{Experiments}
\label{sec:experiments}

We evaluate our polynomial modules on ImageNet-1K classification, comparing against MetaFormer instantiations and prior polynomial networks. Ablations isolate the contribution of each design choice, demonstrating that polynomial modules combined with our depth-over-width philosophy and stabilization techniques outperform activation-based alternatives.

\subsection{Experimental Setup}
\label{sec:setup}

\textbf{Dataset.} We train and evaluate on ImageNet-1K \citep{deng2009imagenet}, the standard large-scale benchmark for image classification. The dataset contains 1.28M training images and 50K validation images across 1,000 classes. All models are trained at $224 \times 224$ resolution without additional pre-training data.

\textbf{Training recipe.} Our training configuration builds on MetaFormer \citep{yu2024metaformer} and MONet \citep{cheng2024monet} with modifications for polynomial networks. We use a smaller batch size (1024 vs. 4096) and increased regularization, as polynomial networks have higher capacity and benefit from stronger regularization. Complete training details, including our dropout and stochastic depth schedules, are provided in \cref{app:training_details}.

\textbf{Baselines.} We compare against: (1) MetaFormer instantiations ConvFormer and CAFormer \citep{yu2024metaformer}, which use the same primitives we target; (2) prior polynomial networks MONet \citep{cheng2024monet}, DTTN \citep{nie2025dttn}, and StarNet \citep{ma2024starnet}; and (3) other strong baselines.

\textbf{Model scales.} We evaluate four model sizes: Tiny (T, $\sim$6M params), Small (S, $\sim$26M params), Base (B, $\sim$40M params), and Large (L, $\sim$60M params). Ablations are conducted at the Tiny scale for computational efficiency.

\subsection{Main Results}
\label{sec:main_results}

\cref{tab:main_results} presents ImageNet-1K classification results. Our polynomial backbones consistently match or exceed MetaFormer instantiations across all scales while substantially outperforming prior polynomial networks by 2--3 percentage points at lower computational cost. These results demonstrate that activation functions are not necessary for competitive vision backbones.  Compared to prior polynomial networks, the gains are substantial: CPolyNeXt-S (26M, 4.8G FLOPs, 83.9\%) outperforms DTTN-B (36M, 12.3G FLOPs, 82.4\%) by 1.5 points despite using 28\% fewer parameters and 61\% fewer FLOPs. APolyNeXt further demonstrates that polynomial attention is a viable alternative to softmax attention in vision backbones.

Beyond matching our direct baselines, polynomial modules compete favorably with architectures that introduce specialized mechanisms. On the convolutional side, CPolyNeXt-B (40M) achieves 84.7\%, outperforming InternImage-S \citep{wang2023internimage} (50M, 84.2\%) which uses deformable convolutions, and matching UniConvNet-S \citep{wang2025uniconvnet} (50M, 84.5\%) at 80\% of the parameters. CPolyNeXt-L (57M, 84.9\%) exceeds MogaNet-L \citep{li2024moganet} (83M, 84.7\%), which employs multi-order gating aggregation, while using only 69\% of the parameters. At the largest scale, CPolyNeXt-L also surpasses ConvFormer-B36 \citep{yu2024metaformer} (100M, 84.8\%) while using roughly half the parameters and FLOPs. On the attention side, APolyNeXt-S (26M, 84.3\%) surpasses both RMT-S \citep{fan2024rmt} (27M, 84.1\%), which introduces retentive self-attention, and TransNeXt-Tiny \citep{shi2024transnext} (28M, 84.0\%), which uses pixel-focused attention. At medium scale, APolyNeXt-B (41M, 84.9\%) outperforms TransNeXt-Small \citep{shi2024transnext} (50M, 84.7\%) at 82\% of parameters. These comparisons suggest that polynomial nonlinearity, when properly stabilized, provides sufficient expressiveness without requiring complex architectural innovations. \cref{fig:scaling_curves} visualizes these favorable scaling properties. The depth-over-width design does incur a throughput cost relative to MetaFormer at matched scale (discussed in \cref{sec:discussion}), but our models simultaneously use substantially less peak GPU memory than both MetaFormer and prior polynomial networks, and remain faster than prior polynomial networks at matched accuracy. The full throughput-memory comparison is reported in \cref{tab:throughput}.

\begin{table}[t]
\centering
\caption{\textbf{Robustness evaluation.} Polynomial networks achieve superior robustness, matching models with nearly twice the parameters on out-of-distribution benchmarks. Evaluated without fine-tuning. \poly~indicates activation-free. mCE: mean corruption error (lower is better).}
\label{tab:robustness}
\fontsize{7}{8.3}\selectfont
\setlength{\tabcolsep}{3pt}
\renewcommand{\arraystretch}{1}

\adjustbox{width=0.95\columnwidth} {%
\begin{tabular}{lccccc}
\rowcolor{HeaderBlue}
\textcolor{white}{\textbf{Model}} & 
\textcolor{white}{\textbf{Clean}} & 
\textcolor{white}{\textbf{IN-C}$\downarrow$} & 
\textcolor{white}{\textbf{IN-A}} & 
\textcolor{white}{\textbf{IN-R}} & 
\textcolor{white}{\textbf{IN-Sk}} \\
\midrule

\rowcolor{SectionGray}
\multicolumn{6}{l}{\textit{Small models} ($\sim$25--30M params)} \\
Swin-T & 81.3 & 62.0 & 21.6 & 41.3 & 29.1 \\
ConvNeXt-T & 82.1 & 53.2 & 24.2 & 47.2 & 33.8 \\
ConvFormer-S18 & 83.0 & 51.7 & 25.3 & 48.7 & 35.2 \\
CAFormer-S18 & 83.6 & 47.4 & 33.5 & 48.7 & 36.6 \\
\rowcolor{OursHighlight}
\textbf{CPolyNeXt-S}\poly & 83.9 & 47.9 & 35.1 & 49.4 & \cellcolor{BestGreen}\textbf{37.8} \\
\rowcolor{OursHighlight}
\textbf{APolyNeXt-S}\poly &
\cellcolor{BestGreen}\textbf{84.3} &
\cellcolor{BestGreen}\textbf{45.0} &
\cellcolor{BestGreen}\textbf{39.6} &
\cellcolor{BestGreen}\textbf{49.7} &
37.5 \\
\addlinespace[2pt]

\rowcolor{SectionGray}
\multicolumn{6}{l}{\textit{Medium models} ($\sim$35--50M params)} \\
Swin-S & 83.0 & 52.7 & 32.3 & 45.1 & 32.4 \\
ConvNeXt-S & 83.1 & 51.2 & 31.2 & 49.5 & 37.1 \\
MONet-S\poly & 81.3 & 49.7 & -- & -- & -- \\
ConvFormer-S36 & 84.1 & 47.1 & 33.2 & 50.8 & 38.4 \\
CAFormer-S36 & 84.5 & 44.7 & 40.9 & 51.7 & 39.5 \\
\rowcolor{OursHighlight}
\textbf{CPolyNeXt-B}\poly & 84.7 & 44.5 & 42.8 & 52.0 & 40.0 \\
\rowcolor{OursHighlight}
\textbf{APolyNeXt-B}\poly &
\cellcolor{BestGreen}\textbf{84.9} &
\cellcolor{BestGreen}\textbf{42.7} &
\cellcolor{BestGreen}\textbf{45.5} &
\cellcolor{BestGreen}\textbf{52.9} &
\cellcolor{BestGreen}\textbf{41.1} \\
\addlinespace[2pt]

\rowcolor{SectionGray}
\multicolumn{6}{l}{\textit{Large models} ($\sim$50--90M params)} \\
Swin-B & 83.5 & 54.4 & 35.8 & 46.6 & 32.4 \\
ConvNeXt-B & 83.8 & 46.8 & 36.7 & 51.3 & 38.2 \\
ConvFormer-M36 & 84.5 & 46.5 & 37.6 & 51.0 & 39.2 \\
CAFormer-M36 &
\cellcolor{BestGreen}\textbf{85.2} &
42.6 &
45.6 & 51.7 & 39.6 \\
\rowcolor{OursHighlight}
\textbf{CPolyNeXt-L}\poly & 84.9 & \cellcolor{BestGreen}\textbf{42.5} & 48.3 & \cellcolor{BestGreen}\textbf{54.5} & \cellcolor{BestGreen}\textbf{41.8} \\
\rowcolor{OursHighlight}
\textbf{APolyNeXt-L}\poly &
\cellcolor{BestGreen}\textbf{85.2} &
42.9 &
\cellcolor{BestGreen}\textbf{49.2} &
54.0 &
\cellcolor{BestGreen}\textbf{41.8} \\
\bottomrule
\end{tabular}%
}
\end{table}

\subsection{Ablation Studies}
\label{sec:ablations}

We conduct ablations on CPolyNeXt-T and APolyNeXt-T to isolate the contribution of each design choice.

\textbf{Polynomial modules vs.\ activations.}
\label{sec:ablation_polynomial}
\cref{tab:ablation_polynomial} tests our central claim by reintroducing activations into polynomial modules. Across all modules, activations consistently degrade performance. PolyConv is most sensitive: replacing it with a standard separable convolution costs 0.9 points, and any form of activation insertion (before, after, or around the Hadamard product) hurts accuracy. For PolyAttn, the specific kernel choice matters less than the overall design where swapping our polynomial kernel for softmax costs only 0.1 points, but replacing PolyAttn entirely with standard attention drops 1.3 points, indicating that the shared $Q/K$ projection and depthwise convolutions contribute more than the kernel. Most strikingly, replacing Hadamard products with addition causes catastrophic failure ($-$22.3 points), confirming that multiplicative interaction is essential to the polynomial formulation.

\begin{table}[t]
\centering
\caption{\textbf{Polynomial modules vs.\ activations.} Reintroducing activations degrades performance (negative $\Delta$). CPolyNeXt-T baseline: 80.2\%. APolyNeXt-T baseline: 80.9\%. $^\dagger$Requires warmup and post-MLP LayerNorm. $^\ddagger$Number of heads reduced to match compute.}
\label{tab:ablation_polynomial}
\fontsize{6}{8}\selectfont
\setlength{\tabcolsep}{4pt}
\renewcommand{\arraystretch}{1.10}

\adjustbox{width=0.7\columnwidth}{%
\begin{tabular}{lc}
\rowcolor{HeaderBlue}
\textcolor{white}{\textbf{Configuration}} & 
\textcolor{white}{\textbf{$\Delta$ Acc (\%)}} \\
\midrule

\rowcolor{SectionGray}
\multicolumn{2}{l}{\textit{CPolyNeXt-T}} \\
\quad PolyMLP $\rightarrow$ MLP+GELU$^\dagger$ & \cellcolor{DeltaLight}$-$0.1 to $-$0.4 \\
\quad PolyConv $\rightarrow$ SepConv+GELU & \cellcolor{DeltaMild}$-$0.9 \\
\quad + $\sigma(\bm{x}) * \bm{x}$ (act one branch) & \cellcolor{DeltaLight}$-$0.4 \\
\quad + $\sigma(\bm{x} * \bm{x})$ (act after mult) & \cellcolor{DeltaMedium}$-$1.0 \\
\quad + $\sigma(\bm{x}) * \sigma(\bm{x})$ (act both) & \cellcolor{DeltaMild}$-$0.6 \\
\quad Fine branch only (no coarse) & \cellcolor{DeltaMild}$-$0.5 \\
\quad Hadamard $\rightarrow$ Addition & \cellcolor{DeltaSevere}\textcolor{white}{\textbf{$-$22.3}} \\
\addlinespace[2pt]

\rowcolor{SectionGray}
\multicolumn{2}{l}{\textit{APolyNeXt-T}} \\
\quad PolyMLP $\rightarrow$ MLP+GELU & \cellcolor{DeltaLight}$-$0.3 \\
\quad PolyAttn $\rightarrow$ Std Attn$^\ddagger$ & \cellcolor{DeltaMedium}$-$1.3 \\
\quad Poly kernel + $\ell_1$ $\rightarrow$ Softmax & \cellcolor{DeltaLight}$-$0.1 \\
\quad Polynomial degree $p=3$ & \cellcolor{DeltaLight}$-$0.3 \\
\quad Polynomial degree $p=5$ & \cellcolor{DeltaLight}$-$0.1 \\
\bottomrule
\end{tabular}%
}
\end{table}

\textbf{Stabilization and architecture.}
\label{sec:ablation_stabilization}
\cref{tab:ablation_architecture} validates our stabilization techniques and depth-over-width design. Sigmoid-Scale is critical, but the initialization geometry, not the sigmoid nonlinearity itself, is the primary mechanism: replacing $\sigma(\lambda)$ with a free learnable scalar initialized to the same value costs only 0.5 points, while LayerScale fails even with small initialization ($10^{-6}$, $-$0.8 points) and standard initialization causes training collapse ($-$12.8 points). The smaller 0.5-point gap between the sigmoid and free-scalar variants reflects a secondary optimization benefit: $\sigma'(\lambda) = \sigma(\lambda)(1{-}\sigma(\lambda))$ shrinks for deeper sublayers (smaller $\sigma(\lambda)$), tying update magnitude to current contribution size. Multi-input skip connections and pre-cell normalization each contribute meaningfully, and deeper configurations consistently outperform wider ones at matched parameter counts: 3 stacks per cell beats 1 stack by 1.5 points, validating that polynomial networks benefit from depth, which increases polynomial degree exponentially.

\begin{table}[t]
\centering
\caption{\textbf{Stabilization and architecture ablations.} Deeper configurations outperform wider ones at matched parameters. CPolyNeXt-T baseline: 80.2\%. $^\dagger$Training fails to converge until later epochs. $^\ddagger$Learnable scalar initialized to $\sigma(\lambda)$ but without the sigmoid.}
\label{tab:ablation_architecture}
\fontsize{6}{8}\selectfont
\setlength{\tabcolsep}{4pt}
\renewcommand{\arraystretch}{1.10}

\adjustbox{width=0.7\columnwidth}{%
\begin{tabular}{lc}
\rowcolor{HeaderBlue}
\textcolor{white}{\textbf{Configuration}} & 
\textcolor{white}{\textbf{$\Delta$ Acc (\%)}} \\
\midrule

\rowcolor{SectionGray}
\multicolumn{2}{l}{\textit{Sigmoid-Scale ablations}} \\
\quad $\rightarrow$ Learnable scalar (no $\sigma$)$^\ddagger$ & \cellcolor{DeltaLight}$-$0.5 \\
\quad $\rightarrow$ LayerScale (init=$10^{-6}$) & \cellcolor{DeltaMild}$-$0.8 \\
\quad $\rightarrow$ LayerScale (init=1.0)$^\dagger$ & \cellcolor{DeltaSevere}\textcolor{white}{\textbf{$-$12.8}} \\
\addlinespace[2pt]

\rowcolor{SectionGray}
\multicolumn{2}{l}{\textit{Skip connection ablations}} \\
\quad Remove multi-input skip & \cellcolor{DeltaMild}$-$0.6 \\
\quad Remove norm before cell & \cellcolor{DeltaLight}$-$0.4 \\
\addlinespace[2pt]

\rowcolor{SectionGray}
\multicolumn{2}{l}{\textit{Depth vs.\ width (matched params)}} \\
\quad Wider (2 stacks/cell) & \cellcolor{DeltaMild}$-$0.7 \\
\quad Wider (1 stack/cell) & \cellcolor{DeltaMedium}$-$1.5 \\
\bottomrule
\end{tabular}%
}
\end{table}

\textbf{Recipe sensitivity.}
\label{sec:ablation_recipe}
A natural question is whether our polynomial models depend on their custom training recipe, or whether the modules themselves drive performance. We address this with cross-recipe experiments at Small scale (\cref{tab:ablation_recipe}), retraining ConvFormer-S18 and CAFormer-S18 in their codebase under matched speedups (\texttt{torch.compile}, AMP bf16) to within $0.1$ of published numbers ($82.9\%$ and $83.5\%$). Optimizer choice is where model-specificity lives, but regularization transfers in both directions: ConvFormer under our regularization is essentially unchanged ($+0.1$), and CAFormer under our full recipe drops only $0.3$, while ConvFormer \emph{collapses} under our optimizer. Both CPolyNeXt-S ($-0.2$) and APolyNeXt-S ($-0.4$) are nearly unaffected by ConvFormer's optimizer, comparable to what MetaFormer models themselves experience under recipe swaps; the polynomial modules are no more recipe-sensitive than standard architectures.

\begin{table}[t]
\centering
\caption{\textbf{Cross-recipe analysis at Small scale.} $\Delta$ accuracy from each model's own baseline when trained with the other family's optimizer and/or regularization. Optimizer choice is model-specific; regularization transfers in both directions.}
\label{tab:ablation_recipe}
\fontsize{6}{8}\selectfont
\setlength{\tabcolsep}{4pt}
\renewcommand{\arraystretch}{1.10}

\adjustbox{width=0.95\columnwidth}{%
\begin{tabular}{lc}
\rowcolor{HeaderBlue}
\textcolor{white}{\textbf{Configuration}} &
\textcolor{white}{\textbf{$\Delta$ Acc (\%)}} \\
\midrule

\rowcolor{SectionGray}
\multicolumn{2}{l}{\textit{Ours w/ ConvFormer settings} (CPolyNeXt-S 83.9\%, APolyNeXt-S 84.3\%)} \\
\quad CPolyNeXt-S w/ ConvFormer optimizer & \cellcolor{DeltaLight}$-$0.2 \\
\quad APolyNeXt-S w/ ConvFormer optimizer & \cellcolor{DeltaLight}$-$0.4 \\
\addlinespace[2pt]

\rowcolor{SectionGray}
\multicolumn{2}{l}{\textit{ConvFormer-S18 w/ our settings} (baseline 82.9\%)} \\
\quad w/ our optimizer (w/ or w/o our reg.) & \cellcolor{DeltaSevere}\textcolor{white}{\textbf{collapse ($\sim$6\%)}} \\
\quad w/ our regularization & \cellcolor{DeltaLight}$+$0.1 \\
\addlinespace[2pt]

\rowcolor{SectionGray}
\multicolumn{2}{l}{\textit{CAFormer-S18 w/ our settings} (baseline 83.5\%)} \\
\quad w/ our optimizer & \cellcolor{DeltaLight}$-$0.4 \\
\quad w/ our regularization & \cellcolor{DeltaLight}$-$0.3 \\
\quad w/ our full recipe & \cellcolor{DeltaLight}$-$0.3 \\
\bottomrule
\end{tabular}%
}
\end{table}

\textbf{Module drop-in.} To further isolate module-level generality, we replace the spatial mixers in CAFormer-S18 with PolyConv and PolyAttn while keeping CAFormer's architecture and training recipe and matching FLOPs/parameters. Both substitutions improve ImageNet accuracy by $+0.1$ to $+0.3$ points, indicating that the polynomial modules transfer beyond the PolyNeXt architecture itself.

\subsection{Robustness Evaluation}
\label{sec:robustness}

Following MetaFormer \citep{yu2024metaformer}, we evaluate robustness on ImageNet-C \citep{hendrycks2019corruptions} (common corruptions such as noise and blur), ImageNet-A \citep{hendrycks2021natural} (natural adversarial examples), ImageNet-R \citep{hendrycks2021manyfaces} (renditions such as paintings and sculptures), and ImageNet-Sketch \citep{wang2019penalizing} (sketch representations) without fine-tuning (\cref{tab:robustness}). Our polynomial backbones demonstrate superior robustness across all MetaFormer instantiations at comparable scales, with APolyNeXt showing particular gains on adversarial and out-of-distribution data. Notably, our medium-scale convolutional model CPolyNeXt-B (40M parameters) surpasses ConvFormer-B36 (100M) on ImageNet-A (42.8 vs.\ 40.1), ImageNet-R (52.0 vs.\ 51.1), and ImageNet-Sketch (40.0 vs.\ 39.5), while CPolyNeXt-L (57M) matches or exceeds CAFormer-B36 (99M) on ImageNet-C (42.5 vs.\ 42.6) and ImageNet-R (54.5 vs.\ 53.9) at nearly half the parameters. These results suggest that polynomial networks offer favorable robustness-efficiency trade-offs beyond clean accuracy.

\subsection{Dense Prediction on ADE20K}
\label{sec:segmentation}

To test transfer beyond classification, we evaluate semantic segmentation on ADE20K \citep{zhou2019semantic} using UperNet \citep{xiao2018unified} following the ConvNeXt \citep{liu2022convnext} training recipe (160K iterations, AdamW with $\text{lr}{=}10^{-4}$ and weight decay $0.05$), with PolyNeXt-specific parameter groups (no weight decay on Sigmoid-Scale, multi-input skip, and normalization parameters) and no further tuning. MetaFormer baseline numbers are from \citet{yu2024metaformer}. As shown in \cref{tab:ade20k}, CPolyNeXt-S surpasses both ConvFormer-S18 ($+2.0$ mIoU) and CAFormer-S18 ($+1.7$ mIoU) at matched parameters, and matches ConvFormer-S36 (50.7 mIoU at 67M params, 1003G MACs) with 54M params and 941G MACs. Segmentation gains thus exceed our classification margins, suggesting polynomial backbones transfer particularly well to dense prediction. APolyNeXt-S required per-cell gradient clipping at norm $5$; the convolutional variant needed no modifications.

\begin{table}[t]
\centering
\caption{\textbf{ADE20K semantic segmentation.} UperNet with 160K iterations under the ConvNeXt recipe. Polynomial backbones surpass both MetaFormer baselines at matched parameters.}
\label{tab:ade20k}
\fontsize{7.5}{9.5}\selectfont
\setlength{\tabcolsep}{5pt}
\renewcommand{\arraystretch}{1.10}

\adjustbox{width=0.95\columnwidth}{%
\begin{tabular}{lccc}
\rowcolor{HeaderBlue}
\textcolor{white}{\textbf{Model}} &
\textcolor{white}{\textbf{Params (M)}} &
\textcolor{white}{\textbf{MACs (G)}} &
\textcolor{white}{\textbf{mIoU}} \\
\midrule
ConvFormer-S18~\citep{yu2024metaformer} & 54 & 925 & 48.6 \\
CAFormer-S18~\citep{yu2024metaformer} & 54 & 1024 & 48.9 \\
\rowcolor{OursHighlight}
\textbf{CPolyNeXt-S}\poly & 54 & 941 & \cellcolor{BestGreen}\textbf{50.6} \\
\rowcolor{OursHighlight}
\textbf{APolyNeXt-S}\poly & 55 & 1121 & 49.9 \\
\bottomrule
\end{tabular}%
}
\end{table}

\subsection{Towards FHE: Fully Polynomial Variants}
\label{sec:fhe}

While our main models remove all pointwise activations, LayerNorm remains a barrier to FHE-compatible inference because it requires division and square-root on encrypted values. To address this, we train \emph{fully} polynomial variants in which every LayerNorm is replaced by a polynomial-compatible BatchNorm: it reduces over batch and channels $(B,C)$ per spatial position with a factorized affine and running statistics at inference. The PolyAttn $\ell_1$ normalization is replaced analogously with a running row-sum estimate (full derivation in \cref{app:poly_norm}). The entire inference pass thus involves only additions and multiplications.

As shown in \cref{tab:bn_variants}, our CPolyNeXt-S BN variant reaches $82.7\%$ on ImageNet, a $5.0$-point improvement over the best prior fully polynomial result (MONet-T BN, $77.7\%$). Notably, CPolyNeXt-S BN ($82.7\%$, 26M) surpasses ConvNeXt-T ($82.1\%$, 29M), a strong activation-based baseline, at fewer parameters, and approaches ConvFormer-S18 ($83.0\%$) within $0.3$ points. The LN$\to$BN gap is larger for the attention variant (APolyNeXt-T: $-2.9$ vs.\ CPolyNeXt-T: $-1.9$), reflecting the additional approximation in the $\ell_1$ attention normalization, and shrinks with scale for the convolutional variant (CPolyNeXt-S: $-1.2$).

\begin{table}[t]
\centering
\caption{\textbf{Fully polynomial variants.} Replacing LayerNorm with polynomial-compatible BatchNorm makes the entire inference pass additions and multiplications only. CPolyNeXt-S BN at $82.7\%$ surpasses ConvNeXt-T ($82.1\%$) and is $+5.0$ over the best prior fully polynomial model.}
\label{tab:bn_variants}
\fontsize{7}{8.5}\selectfont
\setlength{\tabcolsep}{4pt}
\renewcommand{\arraystretch}{1.10}

\adjustbox{width=0.99\columnwidth}{%
\begin{tabular}{lcccccc}
\rowcolor{HeaderBlue}
\textcolor{white}{\textbf{Model}} &
\textcolor{white}{\textbf{Params}} &
\textcolor{white}{\textbf{FLOPs}} &
\textcolor{white}{\textbf{LN ver.}} &
\textcolor{white}{\textbf{Poly BN}} &
\textcolor{white}{\textbf{$\Delta$}} \\
\midrule

\rowcolor{SectionGray}
\multicolumn{6}{l}{\textit{Prior polynomial networks}} \\
MONet-T (BN)\poly & 10.3M & 2.8G & 77.0 & 77.7 & $+$0.7 \\
DTTN-S (no norm)\poly & 12.3M & 4.1G & 79.4 & 77.2 & $-$2.2 \\
\addlinespace[2pt]

\rowcolor{SectionGray}
\multicolumn{6}{l}{\textit{Ours}} \\
\rowcolor{OursHighlight}
\textbf{CPolyNeXt-T}\poly & 6.4M & 1.2G & 80.2 & 78.3 & $-$1.9 \\
\rowcolor{OursHighlight}
\textbf{APolyNeXt-T}\poly & 6.5M & 1.3G & 80.9 & 78.0 & $-$2.9 \\
\rowcolor{OursHighlight}
\textbf{CPolyNeXt-S}\poly & 26M & 4.8G & 83.9 & \cellcolor{BestGreen}\textbf{82.7} & $-$1.2 \\
\bottomrule
\end{tabular}%
}
\end{table}

\section{Discussion}
\label{sec:discussion}

\textbf{From capacity to trainability.}
Our results suggest that the primary barrier to competitive polynomial networks was not representational capacity but optimization stability. Polynomial degree grows exponentially with depth while parameters grow only linearly, making deep polynomial networks highly expressive. This expressiveness manifests empirically as a need for stronger regularization (smaller batch sizes, higher stochastic depth). However, depth also amplifies instability: multiplying two large values produces even larger values, and this compounds across layers. Prior polynomial networks avoided this instability through shallower, wider designs, sacrificing the capacity benefits of depth. Our stabilization techniques address this through controlled residual magnitudes (Sigmoid-Scale), improved gradient flow (multi-input skip connections), and careful normalization placement. With these in place, we can train networks just under 200 layers, enabling polynomial modules to match or exceed activation-based counterparts (\cref{tab:main_results}).

\textbf{Why do activations hurt?}
Adding activations to polynomial modules might be expected to add expressiveness, but within our depth-over-width architecture the opposite occurs. The block $(\bm{W}_a\bm{x}) * (\bm{W}_b\bm{x})$ exhibits a mutual gradient coupling: in the backward pass, the gradient flowing into $\bm{W}_a$ is scaled by $\bm{W}_b\bm{x}$ and vice versa, so each branch learns through its sibling's output. Adding GELU breaks this coupling, since its near-zero derivative for negative inputs creates dead zones where gradient flow is suppressed. This explains the ordering in \cref{tab:ablation_polynomial}, where $\sigma$ denotes GELU: applying it to \emph{one branch} ($\sigma(\bm{W}_a\bm{x}) * (\bm{W}_b\bm{x})$, $\Delta=-0.4$) partially blocks one gradient path; applying it to \emph{both branches} ($\sigma(\bm{W}_a\bm{x}) * \sigma(\bm{W}_b\bm{x})$, $\Delta=-0.6$) blocks both paths independently; and applying it \emph{after the product} ($\sigma((\bm{W}_a\bm{x}) * (\bm{W}_b\bm{x}))$, $\Delta=-1.0$) is worst because a single gate blocks gradients to \emph{both} projections simultaneously whenever the product is negative. Replacing multiplication with addition ($\Delta=-22.3$) removes the mutual gradient coupling entirely, confirming that this coupling is the essential source of expressive nonlinearity in our polynomial modules.

\textbf{Limitations.}
Our training recipe does not transfer directly to standard pipelines: polynomial networks require smaller batch sizes, progressive dropout schedules, and careful initialization to train stably. The depth-over-width design incurs throughput overhead compared to shallower, wider architectures, even at matched FLOPs. Hadamard products are sensitive to learning rate due to multiplicative amplification. Finally, while our fully polynomial BN variants (\cref{sec:fhe}) show that LayerNorm is not an absolute barrier to activation- and normalization-free inference, end-to-end FHE deployment remains future work.

\section{Conclusion}
\label{sec:conclusion}

We introduced polynomial alternatives for the core modules in modern vision backbones: PolyMLP for channel mixing, PolyConv for convolutional spatial mixing, and PolyAttn for attention-based spatial mixing. Each module substitutes pointwise activations or exponential operations with Hadamard products while preserving input-output interfaces. Combined with lightweight stabilization techniques and a depth-over-width design, these modules match or exceed activation-based counterparts across model scales on ImageNet classification and out-of-distribution robustness, and transfer favorably to ADE20K semantic segmentation, where our CPolyNeXt-S surpasses both ConvFormer-S18 and CAFormer-S18 at matched parameters (\cref{sec:segmentation}). We further show that the activation-free paradigm can be extended to normalization: by replacing LayerNorm with a polynomial-compatible BatchNorm, our \emph{fully} polynomial variants reach $82.7\%$ on ImageNet, a $5.0$-point advance over the best prior fully polynomial model and a step toward FHE-compatible inference (\cref{sec:fhe}).

Beyond the empirical results, our work suggests that activation functions, at least for image recognition within modern vision architectures, are not a representational necessity but rather one solution to the optimization challenges of deep networks. Once training stability is addressed through other means, polynomial interactions can serve as effective sources of nonlinearity. Future directions include characterizing the optimization dynamics of polynomial networks, extending to domains such as language modeling where softmax attention is ubiquitous, and FHE deployment.

\section*{Acknowledgements}

We thank our anonymous reviewers for their thoughtful and constructive feedback, which substantially improved this work. This research was performed using the compute resources and assistance of the UW--Madison Center for High Throughput Computing (CHTC)~\cite{chtc}. We also thank Zulip for the support of our online communication. 

\section*{Impact Statement}

This paper presents fundamental research on vision backbone architectures, investigating whether activation functions are necessary components of modern neural networks. As general-purpose vision backbones, our models inherit the dual-use considerations common to computer vision research. One potential positive impact is enabling privacy-preserving inference: by removing activation functions, our architectures take a step toward models compatible with Fully Homomorphic Encryption, though significant barriers remain. We note, however, that our activation-free design does not by itself imply safer, fairer, or more privacy-preserving deployment, and those properties require task-specific validation, which is left for future work. We do not foresee negative societal consequences unique to activation-free architectures beyond those inherent to advancing vision model capabilities generally.

\bibliography{main}
\bibliographystyle{icml2026}

\newpage
\appendix
\onecolumn
\renewcommand{\thefigure}{S\arabic{figure}}
\renewcommand{\thetable}{S\arabic{table}}

\begin{figure}[t!]
    \centering
    \centerline{\includegraphics[width=0.8\columnwidth]{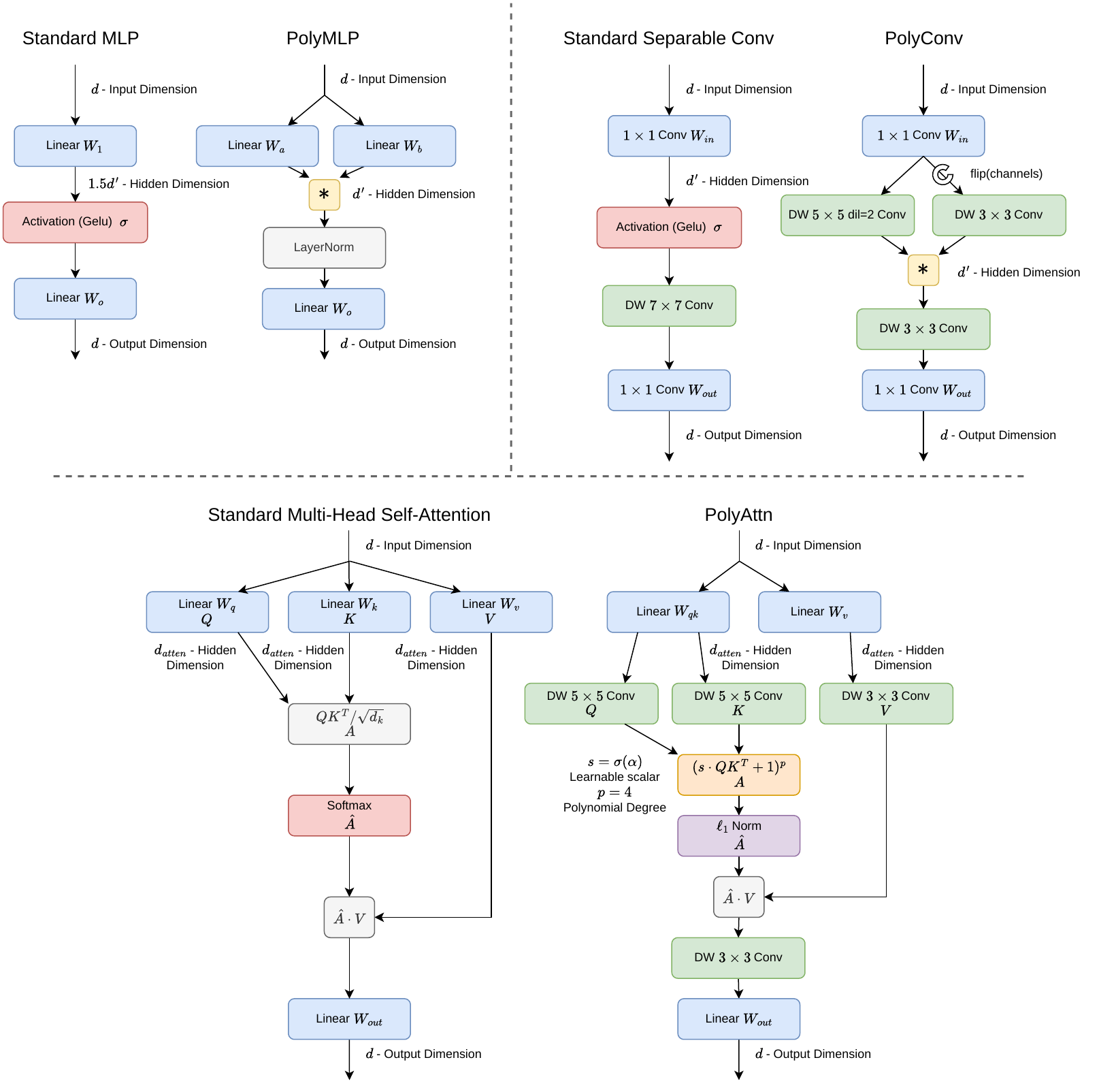}}
    \caption{\textbf{Layer by layer comparisons to standard vision primitives.}
    Top left: a standard MLP applies a pointwise activation (GELU) between linear projections, while \textbf{PolyMLP} replaces the activation with a Hadamard product of two parallel projections followed by LayerNorm and an output projection.
    Top right: a standard separable convolution block applies an activation between input and output projections, while \textbf{PolyConv} replaces the activation with a Hadamard fusion of two depthwise branches with different receptive fields (a coarse dilated branch and a fine branch with channel reversal), followed by a consolidation convolution and output projection.
    Bottom: standard multi-head self-attention uses separate $W_q,W_k,W_v$ and softmax normalization of $QK^\top/\sqrt{d_k}$, while \textbf{PolyAttn} shares the $Q/K$ projection, applies depthwise convolutions to $Q,K,V$, and replaces softmax with a polynomial kernel $(s\cdot QK^\top + 1)^p$ followed by $\ell_1$ normalization, where $s=\sigma(\alpha)$ is a learnable scale and $p$ is the polynomial degree.}
    \label{fig:module_comparisons}
\end{figure}

\section{PolyMLP Computes a Second-Degree Polynomial}
\label{app:polymlp_proof}

We show that PolyMLP computes a second-degree polynomial in the input features. For simplicity, we omit the LayerNorm, which does not affect the polynomial degree.

\begin{proposition}
Let $\bm{x} \in \mathbb{R}^d$ be the input, and let $\bm{W}_a, \bm{W}_b \in \mathbb{R}^{d' \times d}$ and $\bm{W}_o \in \mathbb{R}^{d \times d'}$ be weight matrices. The PolyMLP output
\begin{equation}
\text{PolyMLP}(\bm{x}) = \bm{W}_o \left( (\bm{W}_a \bm{x}) * (\bm{W}_b \bm{x}) \right)
\end{equation}
is a second-degree polynomial in the entries of $\bm{x}$.
\end{proposition}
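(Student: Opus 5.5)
The plan is to expand the output coordinatewise and read off its monomials. Write the rows of $\bm{W}_a$ and $\bm{W}_b$ as $\bm{a}_k^\top$ and $\bm{b}_k^\top$ for $k=1,\dots,d'$, and denote the entries of $\bm{W}_o$ by $(\bm{W}_o)_{ik}$. The $k$-th entry of $\bm{W}_a\bm{x}$ is then the linear form $\bm{a}_k^\top\bm{x}=\sum_{j} (\bm{W}_a)_{kj}x_j$, and likewise for $\bm{b}_k^\top\bm{x}$. The Hadamard product acts coordinatewise, so its $k$-th entry is
\begin{equation*}
(\bm{a}_k^\top\bm{x})(\bm{b}_k^\top\bm{x}) = \sum_{j,l} (\bm{W}_a)_{kj}(\bm{W}_b)_{kl}\, x_j x_l = \bm{x}^\top \bm{a}_k\bm{b}_k^\top \bm{x}.
\end{equation*}
This is a homogeneous quadratic form in $\bm{x}$.

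Next, apply $\bm{W}_o$. The $i$-th output coordinate is a linear combination of these quadratic forms:
\begin{equation*}
\text{PolyMLP}(\bm{x})_i = \bm{x}^\top \bm{M}_i \bm{x}, \qquad \bm{M}_i = \sum_{k=1}^{d'} (\bm{W}_o)_{ik}\, \bm{a}_k\bm{b}_k^\top .
\end{equation*}
Linear combinations of degree-two monomials remain in the span of degree-two monomials, so every coordinate is a homogeneous polynomial of degree at most two in $x_1,\dots,x_d$. The monomial $x_jx_l$ with $j\le l$ has coefficient $(\bm{M}_i)_{jl}+(\bm{M}_i)_{lj}$ when $j\neq l$, and $(\bm{M}_i)_{jj}$ when $j=l$.

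The only subtle point is what ``second-degree'' means. The upper bound above holds for all weights. Exact degree two fails for degenerate weights, for instance $\bm{W}_o=0$. I would therefore phrase the conclusion as: the map lies in the space of polynomials of degree at most two, and it attains degree exactly two for generic weights. To show genericity, it is enough to exhibit a single choice of weights with some nonzero $\bm{M}_i$. Take $d'\ge1$, $\bm{a}_1=\bm{b}_1=\bm{e}_1$, $(\bm{W}_o)_{11}=1$, and all other entries zero; this gives output coordinate $x_1^2$. The set of weights for which all symmetrized $\bm{M}_i$ vanish is the zero set of a nonzero polynomial in the weights, so it has measure zero.

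As the paper notes, the omitted LayerNorm divides by a data-dependent standard deviation. If its statistics are treated as fixed, it contributes only an affine rescaling between the product and $\bm{W}_o$. That rescaling preserves the degree bound, though it can add lower-order terms, making the output an inhomogeneous quadratic.
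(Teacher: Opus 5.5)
Your proposal is correct and follows the same route as the paper. You expand each Hadamard coordinate $(\bm{W}_a\bm{x})_k(\bm{W}_b\bm{x})_k$ into degree-two monomials and observe that $\bm{W}_o$ only forms linear combinations of them; the paper does the same and likewise concludes ``degree at most 2.'' Your quadratic-form notation $\bm{x}^\top\bm{M}_i\bm{x}$, the argument that degree exactly two holds for generic weights, and your more careful handling of LayerNorm (its per-sample variance is not polynomial unless frozen) are valid refinements beyond what the paper states.
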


\begin{proof}
Let $\bm{a} = \bm{W}_a \bm{x} \in \mathbb{R}^{d'}$ and $\bm{b} = \bm{W}_b \bm{x} \in \mathbb{R}^{d'}$. Each entry of $\bm{a}$ is a linear combination of the entries of $\bm{x}$:
\begin{equation}
a_i = \sum_{j=1}^{d} [\bm{W}_a]_{ij} x_j,
\end{equation}
and similarly for $\bm{b}$. The Hadamard product yields:
\begin{equation}
[\bm{a} * \bm{b}]_i = a_i \cdot b_i = \left( \sum_{j=1}^{d} [\bm{W}_a]_{ij} x_j \right) \left( \sum_{k=1}^{d} [\bm{W}_b]_{ik} x_k \right).
\end{equation}
Expanding this product gives a sum of terms of the form $[\bm{W}_a]_{ij} [\bm{W}_b]_{ik} x_j x_k$, which are degree-2 monomials in the entries of $\bm{x}$. The final output projection $\bm{W}_o$ takes linear combinations of these degree-2 terms, preserving the polynomial degree. Therefore, each entry of $\text{PolyMLP}(\bm{x})$ is a polynomial of degree at most 2 in the entries of $\bm{x}$.
\end{proof}

This proof follows analogously from the analysis in MONet \citep{cheng2024monet}, adapted to our simpler symmetric structure.

\section{Extended Related Work}
\label{app:extended_related_work}

\textbf{Attention beyond exponential softmax.} Alternatives to exponential softmax include Performers \citep{choromanski2021performers}, Linear Transformer \citep{katharopoulos2020linear}, cosFormer \citep{qin2022cosformer}, and BASED \citep{arora2024based}. These methods primarily target computational efficiency by avoiding the quadratic cost of full attention. Our PolyAttn has a different motivation: demonstrating that the exponential function is not essential for competitive accuracy. We replace the exponential with a polynomial kernel and apply $\ell_1$ normalization, yielding an activation-free attention mechanism. Because this change targets only the attention kernel computation, it remains compatible with structural improvements like windowing \citep{liu2021swin} or sparse patterns.

\textbf{Stabilizing deep networks.}
Residual connections \citep{he2016resnet} enable training of very deep networks by providing gradient shortcuts. LayerScale \citep{touvron2021cait} introduces learnable per-channel scaling of residual contributions, improving training stability for vision transformers. Multi-hop skip connections, as in DenseNet \citep{huang2017densenet} and NASNet \citep{zoph2018nasnet}, provide additional gradient pathways by connecting layers to multiple predecessors.

Polynomial networks present particular stability challenges because Hadamard products can amplify activation magnitudes: unlike ReLU, which does not magnify outputs, the multiplication of two large values in a Hadamard product produces an even larger value. Prior polynomial networks address this through regularization \citep{chrysos2023regularization} or architectural constraints \citep{cheng2024monet}. We introduce Sigmoid-Scale, which bounds residual contributions through a sigmoid-parameterized learnable scalar, and adopt multi-input skip connections following NASNet. This lightweight approach preserves the core polynomial operations while enabling training at around 200 layers.

\begin{figure}[t]
    \centering
    \includegraphics[width=\columnwidth]{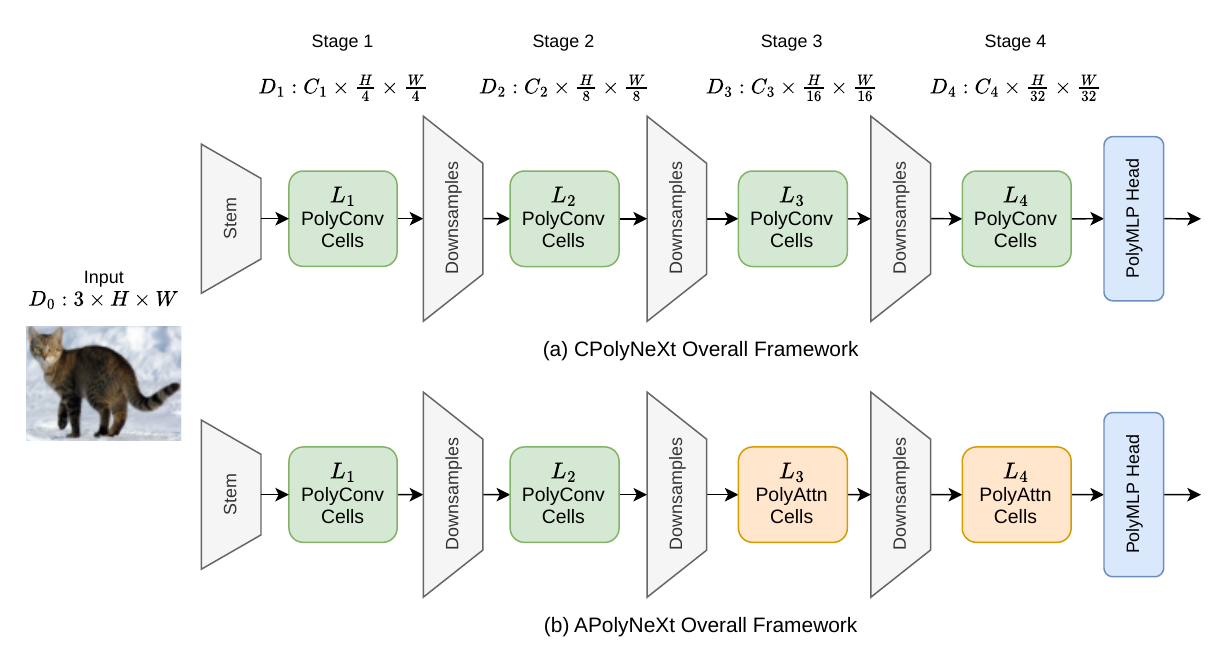}
    \caption{\textbf{Overall PolyNeXt architecture.} PolyNeXt follows a four-stage hierarchical design with decreasing spatial resolution and increasing channel width. The input tensor is $D_0: 3\times H\times W$. The stem is a $7\times7$ convolution with stride 4; downsampling modules use stride-2 convolutions (see \cref{sec:architecture} for details). After the stem, stage $s\in\{1,2,3,4\}$ operates on $D_s: C_s \times \frac{H}{2^{s+1}} \times \frac{W}{2^{s+1}}$ and contains $L_s$ cells. Each cell consists of multiple stacks (spatial mixer plus PolyMLP) connected via multi-input skip connections, as detailed in \cref{fig:stabilization}. CPolyNeXt uses PolyConv cells in all stages, while APolyNeXt uses PolyConv in stages 1--2 and PolyAttn in stages 3--4.}
    \label{fig:architecture}
\end{figure}

\section{Training Details}
\label{app:training_details}

\subsection{Implementation}

All experiments are conducted using PyTorch with \texttt{torch.compile} and automatic mixed precision (AMP) with bfloat16. Training is performed on 2$\times$ NVIDIA RTX 6000 Ada Generation GPUs.

\subsection{Hyperparameters}

\cref{tab:training_details} provides complete training hyperparameters for all model sizes.

\begin{table*}[t]
\centering
\caption{\textbf{Training hyperparameters for PolyNeXt models.} All models are trained on ImageNet-1K for 300 epochs at $224 \times 224$ resolution. RandAugment settings follow MetaFormer.}
\label{tab:training_details}

\begin{minipage}[t]{0.48\textwidth}
\centering
\textbf{(a) CPolyNeXt}\par\vspace{3pt}
\begin{tabular}{@{}lcccc@{}}
\toprule
\textbf{Hyperparameter} & \textbf{T} & \textbf{S} & \textbf{B} & \textbf{L} \\
\midrule
Optimizer & \multicolumn{4}{c}{AdamW} \\
Batch size & \multicolumn{4}{c}{1024} \\
Learning rate & \multicolumn{4}{c}{0.004} \\
Weight decay & \multicolumn{4}{c}{0.01} \\
LR schedule & \multicolumn{4}{c}{Cosine decay} \\
Warmup epochs & 0 & 5 & 5 & 5 \\
Warmup LR multiplier & \multicolumn{4}{c}{0.1} \\
\midrule
RandAugment & \multicolumn{4}{c}{9/0.5} \\
Mixup $\alpha$ & \multicolumn{4}{c}{0.8} \\
CutMix $\alpha$ & \multicolumn{4}{c}{1.0} \\
Random erasing prob. & \multicolumn{4}{c}{0.25} \\
Label smoothing & \multicolumn{4}{c}{0.1} \\
\midrule
Stochastic depth & 0.0 & 0.20 & 0.30 & 0.50 \\
Dropout schedule & \multicolumn{4}{c}{Linear increase} \\
Final dropout & 0.0 & 0.4 & 0.6 & 0.6 \\
\bottomrule
\end{tabular}
\end{minipage}\hfill
\begin{minipage}[t]{0.48\textwidth}
\centering
\textbf{(b) APolyNeXt}\par\vspace{3pt}
\begin{tabular}{@{}lcccc@{}}
\toprule
\textbf{Hyperparameter} & \textbf{T} & \textbf{S} & \textbf{B} & \textbf{L} \\
\midrule
Optimizer & \multicolumn{4}{c}{LAMB} \\
Batch size & \multicolumn{4}{c}{1024} \\
Learning rate & \multicolumn{4}{c}{0.002} \\
Weight decay & \multicolumn{4}{c}{0.01} \\
LR schedule & \multicolumn{4}{c}{Cosine decay} \\
Warmup epochs & \multicolumn{4}{c}{0} \\
Warmup LR multiplier & \multicolumn{4}{c}{0.1} \\
\midrule
RandAugment & \multicolumn{4}{c}{9/0.5} \\
Mixup $\alpha$ & \multicolumn{4}{c}{0.8} \\
CutMix $\alpha$ & \multicolumn{4}{c}{1.0} \\
Random erasing prob. & \multicolumn{4}{c}{0.25} \\
Label smoothing & \multicolumn{4}{c}{0.1} \\
\midrule
Stochastic depth & 0.03 & 0.25 & 0.40 & 0.55 \\
Dropout schedule & \multicolumn{4}{c}{Linear increase} \\
Final dropout & 0.1 & 0.4 & 0.6 & 0.7 \\
\bottomrule
\end{tabular}
\end{minipage}
\end{table*}

\subsection{Comparison with MetaFormer Training Recipe}

Our training recipe differs from MetaFormer \citep{yu2024metaformer} in several ways, with some modifications following MONet \citep{cheng2024monet}. The key insight is that polynomial networks have higher capacity than their activation-based counterparts and therefore benefit from stronger regularization and modified optimization:

\begin{itemize}
    \item \textbf{Batch size:} We use 1024 instead of 4096. We found that polynomial networks benefit from smaller batch sizes, possibly due to the different optimization landscape induced by Hadamard products.
    
    \item \textbf{Optimizer:} We use AdamW for CPolyNeXt and LAMB for APolyNeXt (the attention variant), following MetaFormer's finding that attention models benefit from LAMB.
    
    \item \textbf{Learning rate and weight decay:} We use 0.004/0.002 learning rate for CPolyNeXt/APolyNeXt and 0.01 weight decay, adjusted for the smaller batch size.
    
    \item \textbf{Stochastic depth:} Our stochastic depth rates are set 0.05 lower than the corresponding MetaFormer model one size larger (e.g., our 26M model uses the rate from MetaFormer's 40M model minus 0.05). This increased regularization reflects the higher capacity of polynomial networks.
    
    \item \textbf{Dropout schedule:} Following NASNet's \citep{zoph2018nasnet} use of progressive regularization, we linearly increase dropout over training epochs rather than using constant dropout. We found this prevents numerical instabilities (NaN losses) that occur with constant dropout in deep polynomial networks.
    
    \item \textbf{Warmup:} CPolyNeXt-T trains stably without warmup; larger CPolyNeXt models use 5 warmup epochs to prevent early training instability. APolyNeXt models require no warmup due to the stabilizing effect of Sigmoid-Scale.
    
    \item \textbf{EMA:} We apply exponential moving average (EMA) of model weights for the last 200 epochs, following MONet.
    
    \item \textbf{Weight initialization:} We use Kaiming initialization with gain $\sqrt{2}$ for PolyConv, PolyMLP, and downsampling layer weights.
\end{itemize}

\subsection{Sigmoid-Scale Initialization}
\label{app:sigmoid_init}

Each cell maintains a vector of Sigmoid-Scale parameters $\bm{\lambda}$, one per sublayer (mixer and PolyMLP alternating). For a cell with $S$ stacks (each stack containing one mixer and one PolyMLP), the vector has length $2S_{\max}$, where $S_{\max}$ is the maximum number of stacks in any cell of the network.

For Small and Base models, we initialize:
\begin{equation}
\lambda_i = -\frac{i}{2}, \quad \text{for } i \in \{0, 1, \ldots, 2S_{\max} - 1\}.
\end{equation}
This yields sigmoid values that decrease geometrically with depth within each cell: $\sigma(\lambda_0) = 0.50$, $\sigma(\lambda_1) \approx 0.38$, $\sigma(\lambda_2) \approx 0.27$, and so on.

For the Large model, deeper networks benefit from smaller initial scales to reduce gradient noise during early training, so we subtract an additional $0.5$ from all logits:
\begin{equation}
\lambda_i = -\frac{i}{2} - 0.5, \quad \text{for } i \in \{0, 1, \ldots, 2S_{\max} - 1\}.
\end{equation}

The Tiny model uses a probability-based initialization with shared scales between adjacent blocks, which we found empirically to improve stability at the smallest scale.

\subsection{Polynomial-Compatible Normalization}
\label{app:poly_norm}

We detail the normalization variants used in our fully polynomial models (\cref{sec:fhe}). The key constraint is that every operation must be expressible as a polynomial in the input at inference time, since division and square root are expensive under Fully Homomorphic Encryption (FHE). Standard BatchNorm satisfies this because the running statistics become constants at inference, reducing the whole operation to a per-channel affine map. Standard LayerNorm does not: its statistics are computed per-sample, so the divisor depends on the input. Our variants combine LayerNorm-style channel reduction (which preserves the inductive bias of modern vision backbones) with BatchNorm-style running statistics (which makes the operation polynomial at inference).

\textbf{Spatial normalization: standard BatchNorm vs.\ ours.} Let $\bm{x}\in\mathbb{R}^{B \times C \times H \times W}$. Standard \texttt{nn.BatchNorm2d} reduces over the batch and spatial axes $(B, H, W)$, producing per-channel statistics with per-channel affine parameters $\bm{\gamma}_c, \bm{\beta}_c \in \mathbb{R}^C$:
\begin{align}
\mu_c &= \tfrac{1}{BHW}\textstyle\sum_{b,h,w} x_{bchw}, \\
\sigma_c^2 &= \tfrac{1}{BHW}\textstyle\sum_{b,h,w} (x_{bchw} - \mu_c)^2, \\
y_{bchw} &= \gamma_c \cdot \frac{x_{bchw} - \mu_c}{\sqrt{\sigma_c^2 + \epsilon}} + \beta_c.
\end{align}

Our variant reduces over the batch and channel axes $(B, C)$ instead, producing per-position statistics with a \emph{factorized} affine: separate per-channel parameters $\bm{\gamma}_c, \bm{\beta}_c \in \mathbb{R}^C$ and per-position parameters $\bm{\gamma}_{hw}, \bm{\beta}_{hw} \in \mathbb{R}^{H \times W}$:
\begin{align}
\mu_{hw} &= \tfrac{1}{BC}\textstyle\sum_{b,c} x_{bchw}, \\
\sigma_{hw}^2 &= \tfrac{1}{BC}\textstyle\sum_{b,c} (x_{bchw} - \mu_{hw})^2, \\
y_{bchw} &= (\gamma_c \cdot \gamma_{hw}) \cdot \frac{x_{bchw} - \mu_{hw}}{\sqrt{\sigma_{hw}^2 + \epsilon}} + (\beta_c + \beta_{hw}).
\end{align}
The factorization keeps the affine parameter count at $\mathcal{O}(C + HW)$ rather than $\mathcal{O}(CHW)$. At inference, $\mu_{hw}$ and $\sigma_{hw}^2$ are replaced by running estimates and become fixed constants. The operation then collapses to a per-position affine transformation
\begin{equation}
y_{bchw} = A_{chw}\, x_{bchw} + B_{chw},
\end{equation}
with $A_{chw} = (\gamma_c \cdot \gamma_{hw})/\sqrt{\sigma_{hw}^2 + \epsilon}$ and $B_{chw} = (\beta_c + \beta_{hw}) - A_{chw}\mu_{hw}$, which is a degree-1 polynomial in $\bm{x}$ and therefore FHE-compatible.

In PyTorch, the difference from \texttt{nn.BatchNorm2d} is two lines (the reduction axes) plus the factorized affine:
\begin{verbatim}
# Standard nn.BatchNorm2d: reduce over (B,H,W); gamma_c, beta_c in R^C
mu  = x.mean(dim=(0, 2, 3), keepdim=True)
var = x.var (dim=(0, 2, 3), keepdim=True, unbiased=False)
y = gamma_c * (x - mu) / (var + eps).sqrt() + beta_c

# Ours: reduce over (B,C); factorized affine
mu  = x.mean(dim=(0, 1), keepdim=True)
var = x.var (dim=(0, 1), keepdim=True, unbiased=False)
y = (gamma_c * gamma_hw) * (x - mu) / (var + eps).sqrt() \
    + (beta_c + beta_hw)
# At inference, mu and var are replaced by running buffers
# and the operation reduces to a fixed affine map.
\end{verbatim}

\textbf{PolyAttn $\ell_1$ normalization.} Standard PolyAttn divides each row of the unnormalized attention matrix $\bm{A}\in\mathbb{R}^{B \times H \times N \times N}$ by its own row sum: $\hat{A}_{bhij} = A_{bhij}\,/\,\sum_k A_{bhik}$. This division depends on the current input and is not FHE-compatible. We replace the per-sample row sum with a running estimate, averaged over the batch during training:
\begin{equation}
R_{hi} = \mathbb{E}_b\!\left[\textstyle\sum_k A_{bhik}\right], \qquad \hat{A}_{bhij} = \gamma_{hi} \cdot \frac{A_{bhij}}{R_{hi} + \epsilon},
\end{equation}
with a multiplicative learnable affine $\bm{\gamma} \in \mathbb{R}^{H \times N}$ (one per head and query position). There is no additive bias. The unnormalized PolyAttn weights are already non-negative by construction, so no absolute value is required. At inference, $R_{hi}$ is a fixed constant and the whole attention computation $\hat{\bm{A}}\bm{V}$ is polynomial in the input. The cost is approximation: per-sample normalization is replaced by a single distributional estimate, which is the primary source of the gap reported in \cref{tab:bn_variants} for APolyNeXt-T.

\textbf{Architectural change in APolyNeXt BN.} The standard PolyAttn block applies the output projection $\bm{W}_{\text{out}}$ directly to $\hat{\bm{A}}\bm{V}$ (after the depthwise convolution). In the fully polynomial APolyNeXt variant, we insert an additional copy of our spatial normalization (with reduction over $(B, C)$, factorized affine, running statistics) immediately before $\bm{W}_{\text{out}}$. This is the only block-level architectural change in the BN variants beyond replacing each LayerNorm with our polynomial-compatible spatial normalization.

\textbf{Training.} The BN variants use a unified optimization recipe across both backbone families. We train with AdamW (replacing LAMB for the attention variant), learning rate $10^{-3}$, weight decay $0.05$, and a per-GPU batch size of $256$, with multi-GPU training and gradient accumulation bringing the effective batch size back to $1024$. The Tiny BN variants (CPolyNeXt-T BN and APolyNeXt-T BN) are trained without dropout or stochastic depth; CPolyNeXt-S BN retains the same dropout and stochastic-depth schedule as its LayerNorm baseline. All other hyperparameters (epochs, augmentation, EMA, warmup, weight initialization) follow \cref{tab:training_details}.

\begin{table}[t]
\centering
\caption{PolyNeXt architecture configurations. Each stage $s$ operates at resolution $\frac{H}{2^{s+1}} \times \frac{W}{2^{s+1}}$. Each cell contains multiple \emph{stacks} (one stack = one mixer + one PolyMLP). PolyMLP hidden dimension is $2 \times C_s$ in stages 1--2 and $1.75 \times C_s$ in stages 3--4. PolyConv uses hidden dimension $1 \times C_s$ in stages 1--2 and $0.75 \times C_s$ in stages 3--4; APolyNeXt attention is configured to match this compute budget (head dimension 32, $\lceil C_s / 64 \rceil$ heads per stage).}
\label{tab:architecture}
\begin{tabular}{@{}lcccc@{}}
\toprule
& \textbf{PolyNeXt-T} & \textbf{PolyNeXt-S} & \textbf{PolyNeXt-B} & \textbf{PolyNeXt-L} \\
\midrule
Channels $(C_1, C_2, C_3, C_4)$ & (48, 96, 192, 288) & (72, 144, 288, 432) & (84, 168, 336, 504) & (96, 192, 384, 576) \\
Cells $(L_1, L_2, L_3, L_4)$ & (2, 2, 6, 2) & (3, 3, 8, 3) & (3, 5, 10, 3) & (3, 6, 12, 3) \\
Stacks per cell & (3, 3, 3, 3) & (3, 4, 4, 4) & (4, 4, 4, 4) & (4, 4, 4, 4) \\
\midrule
\multicolumn{5}{@{}l}{\textit{Spatial Mixer:}} \\
\quad CPolyNeXt & \multicolumn{4}{c}{$T$ = (PolyConv, PolyConv, PolyConv, PolyConv)} \\
\quad APolyNeXt & \multicolumn{4}{c}{$T$ = (PolyConv, PolyConv, PolyAttn, PolyAttn)} \\
\bottomrule
\end{tabular}
\end{table}

\begin{table}[t]
\centering
\caption{\textbf{Low-resolution setup (CPolyNeXt-LR).} Models are trained from scratch at native resolution. Unless specified below, we follow the \textbf{CPolyNeXt} ImageNet-1K recipe in \cref{tab:training_details} and reuse the same components as CPolyNeXt-T in \cref{tab:architecture}.}
\label{tab:lr_setup}
\fontsize{8}{10}\selectfont
\setlength{\tabcolsep}{5pt}
\renewcommand{\arraystretch}{1.05}
\begin{tabular}{@{}lccc@{}}
\toprule
\textbf{Hyperparameter} & \textbf{CIFAR-10} & \textbf{SVHN} & \textbf{Tiny ImageNet} \\
\midrule
\multicolumn{4}{@{}l}{\textit{Architecture (shared across datasets)}} \\
Stages & \multicolumn{3}{c}{3} \\
Cells per stage $(L_1, L_2, L_3)$ & \multicolumn{3}{c}{(2, 3, 3)} \\
Channels $(C_1, C_2, C_3)$ & \multicolumn{3}{c}{(72, 144, 288)} \\
Stacks per cell & \multicolumn{3}{c}{(3, 3, 3)} \\
Params (M) & \multicolumn{3}{c}{5.5} \\
Other components & \multicolumn{3}{c}{Same as CPolyNeXt-T} \\
\midrule
\multicolumn{4}{@{}l}{\textit{Optimization and augmentation}} \\
Input resolution & $32^2$ & $32^2$ & $64^2$ \\
Optimizer & \multicolumn{3}{c}{AdamW} \\
LR schedule & \multicolumn{3}{c}{Cosine decay} \\
Batch size & \multicolumn{3}{c}{96} \\
Learning rate & \multicolumn{3}{c}{0.001} \\
Weight decay & \multicolumn{3}{c}{0.05} \\
SVHN augmentation & \multicolumn{1}{c}{--} & \multicolumn{1}{c}{No horizontal flip} & \multicolumn{1}{c}{--} \\
\midrule
Other hyperparameters & \multicolumn{3}{c}{Same as \cref{tab:training_details} (CPolyNeXt)} \\
\bottomrule
\end{tabular}
\end{table}

\section{Computational Cost}
\label{app:computational_cost}

\subsection{Inference Throughput and Memory}

\cref{tab:throughput} compares inference throughput and peak GPU memory of our models against MetaFormer instantiations and prior polynomial networks. Our models achieve substantial memory savings compared to all baselines: CPolyNeXt and APolyNeXt use 30--45\% less memory than MetaFormer at comparable scales, and roughly half the memory of prior polynomial networks. In terms of throughput, our models are slower than MetaFormer due to increased depth, but remain significantly faster than prior polynomial networks (MONet, DTTN) while achieving higher accuracy. At the Tiny scale, our models achieve over 2200 img/s, comparable to MetaFormer.

\begin{table}[h]
\centering
\caption{\textbf{Inference throughput and memory comparison.} Measured on a single NVIDIA RTX 4090 GPU with batch size 128 using \texttt{torch.compile}. $^\star$ indicates activation-free models. Our models achieve substantial memory savings while remaining faster than prior polynomial networks.}
\label{tab:throughput}
\small
\begin{tabular}{@{}lcccc@{}}
\toprule
\textbf{Model} & \textbf{Params (M)} & \textbf{FLOPs (G)} & \textbf{Memory (MB)} & \textbf{Throughput (img/s)} \\
\midrule
\multicolumn{5}{@{}l}{\textit{MetaFormer Instantiations}} \\
ConvFormer-S18 & 27 & 3.9 & 1900 & 2325 \\
ConvFormer-S36 & 40 & 7.6 & 2038 & 1369 \\
ConvFormer-M36 & 57 & 12.8 & 2892 & 975 \\
\cdashline{1-5}
CAFormer-S18 & 26 & 4.1 & 2064 & 1807 \\
CAFormer-S36 & 39 & 8.0 & 2078 & 1019 \\
CAFormer-M36 & 56 & 13.2 & 2800 & 761 \\
\midrule
\multicolumn{5}{@{}l}{\textit{Prior Polynomial Networks}} \\
MONet-T$^\star$ & 10.3 & 2.8 & 1972 & 2078 \\
MONet-S$^\star$ & 32.9 & 6.8 & 3434 & 1322 \\
\cdashline{1-5}
DTTN-T$^\star$ & 7.1 & 2.4 & 2694 & 1411 \\
DTTN-S$^\star$ & 12.3 & 4.1 & 2676 & 1267 \\
DTTN-B$^\star$ & 35.9 & 12.3 & 4640 & 701 \\
\midrule
\multicolumn{5}{@{}l}{\textit{Ours}} \\
CPolyNeXt-T$^\star$ & 6.4 & 1.2 & \textbf{1158} & 2358 \\
CPolyNeXt-S$^\star$ & 26 & 4.8 & \textbf{1568} & 1032 \\
CPolyNeXt-B$^\star$ & 40 & 8.5 & \textbf{1760} & 657 \\
CPolyNeXt-L$^\star$ & 57 & 12.6 & \textbf{2036} & 474 \\
\cdashline{1-5}
APolyNeXt-T$^\star$ & 6.5 & 1.3 & \textbf{1132} & 2270 \\
APolyNeXt-S$^\star$ & 26 & 5.3 & \textbf{1600} & 917 \\
APolyNeXt-B$^\star$ & 41 & 9.3 & \textbf{1758} & 580 \\
APolyNeXt-L$^\star$ & 57 & 13.3 & \textbf{2080} & 447 \\
\bottomrule
\end{tabular}
\end{table}

\section{Smaller-Scale Evaluation}
\label{app:smaller_datasets}

We evaluate on three smaller benchmarks trained from scratch at native resolution (\cref{tab:small_datasets}): CIFAR-10 (50K images, $32{\times}32$, 10 classes), SVHN (73K images, $32{\times}32$, 10 classes), and Tiny ImageNet (100K images, $64{\times}64$, 200 classes). CPolyNeXt-LR outperforms prior polynomial networks by large margins, including $+10$ points on Tiny ImageNet, confirming that our polynomial modules generalize beyond ImageNet-scale data. Configuration details are in \cref{tab:lr_setup}.

\begin{table}[h]
\centering
\caption{\textbf{Smaller datasets.} CPolyNeXt-LR outperforms prior polynomial networks by large margins, including $+10$ points on Tiny ImageNet. Top-1 accuracy (\%) trained from scratch at native resolution. \poly~indicates activation-free.}
\label{tab:small_datasets}
\small
\setlength{\tabcolsep}{8pt}
\renewcommand{\arraystretch}{1.10}
\begin{tabular}{lccc}
\toprule
\textbf{Model} & \textbf{CIFAR-10} & \textbf{SVHN} & \textbf{Tiny-IN} \\
\midrule
\multicolumn{4}{@{}l}{\textit{Non-polynomial baselines}} \\
ResNet-18~\citep{he2016resnet} & 94.4 & 97.3 & 61.5 \\
MLP-Mixer~\citep{tolstikhin2021mlpmixer} & 90.6 & 96.8 & 45.6 \\
ResMLP~\citep{touvron2021resmlp} & 92.3 & 97.1 & 58.9 \\
\midrule
\multicolumn{4}{@{}l}{\textit{Prior polynomial models}} \\
MONet-T\poly~\citep{cheng2024monet} & 94.8 & 97.6 & 61.5 \\
DTTN\poly~\citep{nie2025dttn} & 95.0 & -- & 63.8 \\
\midrule
\multicolumn{4}{@{}l}{\textit{Ours}} \\
\textbf{CPolyNeXt-LR}\poly & \textbf{97.1} & \textbf{98.1} & \textbf{74.0} \\
\bottomrule
\end{tabular}
\end{table}

\end{document}